\documentclass[11pt]{article}

\usepackage[a4paper,margin=1in]{geometry}
\usepackage{amsmath,amssymb,amsthm,mathtools}
\usepackage{booktabs}
\usepackage{array}
\usepackage{tikz}
\usetikzlibrary{arrows.meta,positioning,fit,decorations.pathreplacing}
\usepackage{microtype}
\usepackage{enumitem}
\usepackage{xcolor}
\usepackage{hyperref}
\usepackage[numbers,sort&compress]{natbib}

\hypersetup{
  colorlinks=true,
  linkcolor=blue!60!black,
  citecolor=blue!60!black,
  urlcolor=blue!60!black,
  pdftitle={RIG-RoPE: Relation-Stratified Multimodal Attention with Instance-Local Rotary Geometry and Representation-Aware Traversal Coordinates},
  pdfauthor={Donggen Li}
}

\title{\textbf{RIG-RoPE: Relation-Stratified Multimodal Attention with Instance-Local Rotary Geometry and Representation-Aware Traversal Coordinates}}

\author{
  Donggen Li\thanks{This technical report establishes the problem formulation and method and includes controlled real-checkpoint Qwen2-VL activation/mechanism evidence plus tiny-model validation. It does not include a task-level Qwen benchmark, Qwen training, or evidence of large-scale empirical superiority.}\\
  Sichuan University\\
  \texttt{lidonggen0425@163.com}
}

\date{August 2026}

\newtheorem{definition}{Definition}
\newtheorem{assumption}{Assumption}
\newtheorem{proposition}{Proposition}
\newtheorem{theorem}{Theorem}
\newtheorem{corollary}{Corollary}

\newcommand{\Text}{\mathrm{Text}}
\newcommand{\Vision}{\mathrm{Vision}}
\newcommand{\Image}{\mathrm{Image}}
\newcommand{\Video}{\mathrm{Video}}
\newcommand{\R}{\mathbb{R}}

\newcommand{\inner}[2]{\left\langle #1,#2\right\rangle}
\newcommand{\q}{\mathbf{q}}
\newcommand{\kvec}{\mathbf{k}}
\newcommand{\x}{\mathbf{x}}
\newcommand{\calV}{\mathcal{V}}

\newcommand{\calJ}{\mathcal{J}}

\newcommand{\softmax}{\operatorname{softmax}}

\newcommand{\Shared}{\operatorname{SharedChart}}

\begin{document}
\maketitle

\begin{abstract}
Multimodal rotary positional encoding commonly applies temporal, height, and width phases to interleaved text, image, and video tokens. We identify two structural ambiguities. First, a cross-instance spatial displacement may be numerically available under preprocessing conventions while lacking an intrinsic geometric meaning unless a shared chart or registration is explicitly declared. Second, the scalar advance across visual blocks is often inherited from local coordinate extrema rather than defined as a representation-level traversal measure.

We propose \emph{RIG-RoPE}, which combines instance-local rotary geometry, attention stratified by relation, and representation-aware traversal coordinates. A gauge argument shows why raw cross-instance coordinate subtraction depends on independent chart choices. RIG-RoPE separates relation-homogeneous softmaxes, allocates their total mass using a common H/W-neutral LogSumExp statistic, and defines traversal extent to be additive over ordered slices and sublinear over parallel spatial scale. Text retains unit increments, image patches are simultaneous, and video accumulates over tokenizer-produced temporal tokens, including when $F=1$ remains a video regime. In a matched, inference-only Qwen2-VL-2B checkpoint experiment, text-only native and RIG paths were exactly equal, and the RIG path was exactly invariant to both a whole-chart single-image translation and a second-instance-only translation for two unregistered images. The native two-image path remained sensitive, while an H/W-collapse control confirmed that RIG retained same-instance spatial effects. Visual embeddings and all 729 model parameters were exactly preserved. Across three seeds of a frozen tiny task, the RIG construction likewise reproduced zero clean-to-Gauge logit change, whereas the raw-H/W baseline changed for every seed. The paired Gauge-accuracy differences were $+2/72$, $0$, and $0$, so the preregistered stability gate failed and stable task utility remains unestablished. These results support the specified activation and Gauge mechanisms, not stable task improvement or empirical superiority.
\end{abstract}

\section{Introduction}

Multimodal large language models (MLLMs) increasingly process text, images, and videos inside a unified transformer sequence. A common design serializes visual inputs into patch tokens~\citep{dosovitskiy2020vit} and interleaves them with text tokens, allowing self-attention to mediate both semantic and spatial interaction. Position encoding is therefore not merely an implementation detail: it defines which relative relations are visible to attention.

RoPE~\citep{su2021roformer} encodes one-dimensional relative position through rotations of query and key vectors. Recent MLLMs extend this idea to visual data by decomposing each attention head into temporal, height, and width subspaces. Qwen2-VL, for example, introduces M-RoPE to support text, image, and video tokens under dynamic visual resolution~\citep{wang2024qwen2vl}. Related work studies video-oriented rotary layouts~\citep{liu2025vrope,videorope2025} and cross-modal positional bias~\citep{wang2025circlerope}.

This report argues that the central issue is not only how to choose better static coordinates, but when a coordinate difference should be treated as geometrically meaningful. A visual patch has a height/width coordinate relative to a chart induced by its preprocessing pipeline. A text token does not inhabit that chart. Patches from different images may share a conventional normalized screen coordinate, yet that coordinate is generally not invariant to independent cropping, padding, resizing, or origin choices. A cross-instance displacement becomes geometrically usable only when the model or task explicitly declares a shared chart, registration, or canonical-coordinate contract. RIG-RoPE adopts instance-local geometry as the conservative default while allowing registered cross-instance pairs.

A second issue appears in the temporal subspace. The scalar coordinate used across interleaved blocks should express how far the model traverses through its represented context, not the raster length of a visual block and not necessarily physical or cognitive time. Naive placeholder schemes may assign one step to either a text token or a complete image. M-RoPE is more structured: image tokens share a temporal ID, visual H/W IDs vary over the grid, and the following modality starts after the maximum position ID of the preceding modality~\citep{wang2024qwen2vl}. Consequently, a visual block already induces a resolution-dependent global advance. However, that advance is produced indirectly by the maximum of local coordinate axes. It does not state which axes are ordered, which tokens are parallel, or how video duration should behave under temporal partitioning. We instead define an explicit \emph{multimodal traversal coordinate}: ordered temporal slices accumulate additively, whereas tokens within one parallel spatial slice contribute only a sublinear spatial scale.

We introduce \emph{RIG-RoPE}, named for its \emph{relation-stratified, instance-local geometry}. The framework combines this geometry with representation-aware traversal coordinates and follows two simple rules:
\begin{quote}
\emph{Apply height/width rotary encoding only when a shared visual chart is declared. Do not let spatially registered and unregistered relations compete through uncalibrated H/W phases. Measure global context separation on a traversal axis that is additive along ordered slices and sublinear along parallel spatial slices.}
\end{quote}
For each query, RIG-RoPE forms relation-homogeneous key groups. Same-instance visual interactions retain ordinary H/W RoPE; text--text interactions retain full one-dimensional RoPE; and unregistered interactions use traversal-aware temporal rotation without an asserted H/W displacement. Each group is normalized internally. A common H/W-neutral evidence statistic then determines how much total attention mass each active relation group receives. The method does not block attention: every causally visible key remains reachable through exactly one group.

\paragraph{Intuitive overview.}
RIG-RoPE changes two questions asked by positional attention. First, it asks whether a spatial displacement is declared meaningful for the query--key pair. Second, it asks whether scores produced by different positional rules are calibrated well enough to share a denominator. A missing spatial relation is neither a special distance nor a noisy zero displacement. Setting its RoPE rotation to $R(0)=I$ gives a self-aligned query--key pair the largest possible phase contribution, although this ordering need not hold for arbitrary content vectors. The method therefore uses $I$ only as an explicitly typed ``no H/W transform'' inside the unregistered branch and prevents relation-specific H/W phases from controlling cross-group mass. Independently, it uses a content-independent traversal coordinate determined by the modality regime and post-tokenization structure rather than estimated scene complexity or semantic redundancy.

\paragraph{Contributions.}
This technical report makes four contributions:
\begin{enumerate}[leftmargin=1.5em]
  \item \textbf{Instance-local geometry and null-relation analysis.} We formalize spatial comparability through an explicit shared-chart contract, prove the gauge dependence of raw unregistered displacement, and characterize identity fallback through a deterministic self-aligned result and an assumption-explicit expectation result.
  \item \textbf{Relation-stratified attention with common evidence.} We route textual, spatially registered, and unregistered pairs to relation-homogeneous normalizers. A common H/W-neutral gate controls group mass, admits an exact common-score consistency theorem, and distinguishes the default global-mass semantics from an optional group-balanced prior.
  \item \textbf{Ordered/parallel traversal coordinates.} We derive a scalar context axis that is additive along ordered slices and sublinear along parallel spatial scale, preserve text offsets and image simultaneity, and obtain video partition additivity with $F$ defined as the tokenizer's temporal-token count.
  \item \textbf{Controlled validation separating mechanism invariance from task utility.} In a matched full-checkpoint Qwen2-VL-2B inference run, we verify exact text-only native/RIG equivalence, exact RIG invariance to two declared Gauge translations, native Gauge sensitivity, and retained same-instance spatial effects. A frozen three-seed tiny comparison reproduces the Gauge mechanism distinction but does not establish stable accuracy utility.
\end{enumerate}

\section{Background: Multidimensional RoPE in MLLMs}

\subsection{One-Dimensional RoPE}

Let $\q_i,\kvec_j\in\R^d$ denote the query and key vectors at positions $i$ and $j$. RoPE applies a block-diagonal rotation matrix $R(p)$ to each vector:
\begin{equation}
  s_{ij}^{\mathrm{1D}}
  =
  \inner{R(s_i)\q_i}{R(s_j)\kvec_j}
  =
  \inner{\q_i}{R(s_j-s_i)\kvec_j},
  \label{eq:rope-relative}
\end{equation}
where $s_i$ is the one-dimensional sequence index. The key property is relative-position dependence: the attention logit depends on the offset $s_j-s_i$ rather than only on absolute positions.

\subsection{Multidimensional RoPE}

For visual tokens, M-RoPE assigns two-dimensional RoPE frequency blocks to temporal, height, and width axes. Let $\Omega_t,\Omega_h,\Omega_w$ be disjoint sets of frequency blocks and let $P_a$ select the blocks assigned to axis $a$. Then
\begin{equation}
  \Omega_t\mathbin{\dot\cup}\Omega_h\mathbin{\dot\cup}\Omega_w=\Omega,
  \qquad
  \q_i^a=P_a\q_i,
  \quad
  \kvec_j^a=P_a\kvec_j.
\end{equation}
The sets may be contiguous channel ranges, round-robin interleaved blocks, or all blocks in heads dedicated to one axis. This notation therefore covers classical M-RoPE, MRoPE-Interleave, and head-wise MHRoPE~\citep{huang2025revisiting}.
Given token coordinates $(t_i,h_i,w_i)$, the attention logit is commonly written as
\begin{equation}
  s_{ij}^{M}
  =
  \inner{\q_i^t}{R_t(\Delta t_{ij})\kvec_j^t}
  +
  \inner{\q_i^h}{R_h(\Delta h_{ij})\kvec_j^h}
  +
  \inner{\q_i^w}{R_w(\Delta w_{ij})\kvec_j^w},
  \label{eq:mrope}
\end{equation}
where $\Delta t_{ij}=t_j-t_i$, $\Delta h_{ij}=h_j-h_i$, and $\Delta w_{ij}=w_j-w_i$ up to a sign convention. Equation~\eqref{eq:mrope} is valid when each displacement is well defined. The problem studied here is that the formula is often applied beyond that domain.

\section{Failure Modes of Static Multimodal Position Assignment}

\subsection{Ill-Posed Spatial Relations}

Static multimodal position assignment applies the same coordinate arithmetic to every query-key pair after flattening the input sequence. This is problematic for spatial RoPE because not every token pair admits a shared image-plane coordinate system. A text token has no height/width location in an image, and two patches from different visual instances have coordinates in different local charts. Thus static height/width IDs can create artificial spatial proximity between a text token and a particular image region, or between unrelated patches from different images.

\subsection{Implicit and Axis-Coupled Global Advance}

In a flattened multimodal sequence, a scalar ordering coordinate must coexist with modality-specific local coordinates. A naive placeholder counter can give both sequences
\begin{equation}
  [\Text_1,\Text_2,\Text_3],
  \qquad
  [\Text_1,\Image_2,\Text_3]
\end{equation}
the same temporal offset between $\Text_1$ and $\Text_3$ if the middle text token and the middle image each advance the counter by one unit. M-RoPE does not generally make this exact equal-step assignment: its next modality begins after the maximum ID of the previous modality, so a visual grid can advance later positions by a scale comparable to its largest axis~\citep{wang2024qwen2vl}. The remaining issue is more precise. A maximum-axis rule couples global context advance to a local coordinate convention, treats temporal and spatial extrema through the same aggregation, and does not guarantee additivity when a video is divided into consecutive temporal segments. The desired coordinate should instead expose an explicit scalar measure with three properties: preservation of ordinary text offsets, additivity along genuinely ordered axes, and sublinear contribution from tokens that occur in parallel inside a spatial slice.

\section{Unified Problem Formulation}

\subsection{Token Attributes}

We consider an interleaved sequence of text and visual tokens. Each token $i$ has:
\begin{equation}
  M_i \in \{\Text,\Vision\},
  \qquad
  I_i \in \mathbb{N}_0,
\end{equation}
where $M_i$ is the modality and $I_i$ is the instance identifier. Text tokens use $I_i=0$ by convention. Tokens from the same image or continuous video segment share the same positive instance identifier.

We also group the flattened sequence into semantic blocks. A block $B_b$ is either a single text token, one image instance, or one video instance. Let $b(i)$ denote the block containing token $i$. Each block has a modality label $m_b\in\{\Text,\Image,\Video\}$, a positive traversal extent $D_b$, and a cumulative start anchor
\begin{equation}
  a_b = \sum_{r<b} D_r.
  \label{eq:block-anchor}
\end{equation}
The representation-aware traversal coordinate of token $i$ is denoted by $\tau_i$. The detailed construction of $D_b$ and $\tau_i$ is given in Section~\ref{sec:duration}.

\begin{table}[t]
\centering
\caption{Main notation used in RIG-RoPE.}
\label{tab:notation}
\begin{tabular}{>{\raggedright\arraybackslash}p{0.18\linewidth}>{\raggedright\arraybackslash}p{0.72\linewidth}}
\toprule
Symbol & Meaning \\
\midrule
$M_i$ & modality of token $i$, either text or vision \\
$I_i$ & visual instance identifier; text tokens use $I_i=0$ \\
$B_b$ & semantic block: one text token, one image instance, or one video instance \\
$m_b$ & modality regime of block $B_b$: text, image, or video \\
$U_b$ & number of ordered slices in block $B_b$; for video, $U_b=F_b$ \\
$H_{b,u},W_{b,u}$ & post-tokenization spatial grid of ordered slice $u$ \\
$F_b$ & number of temporal tokens output by the visual tokenizer for video block $B_b$ \\
$\ell(H,W)$ & characteristic spatial scale, homogeneous under uniform resizing \\
$\beta_V$ & video parallel-scale exponent in the fixed family, $0\leq\beta_V\leq1$ \\
$D_I^\star,d_V^\star$ & reference image extent and reference video per-slice extent \\
$D_b$ & traversal extent assigned to block $B_b$ \\
$a_b$ & cumulative start anchor of block $B_b$ on the traversal axis \\
$\tau_i$ & representation-aware traversal coordinate of token $i$ \\
$\chi_m$ & local image-plane coordinate chart of visual instance $m$ \\
$\mathcal{G}$ & declared chart contract specifying which visual pairs share a coordinate system \\
$r_{ij}$ & relation class of pair $(i,j)$: textual $T$, spatially registered $S$, or unregistered $U$ \\
$\calJ_i^r$ & causally visible keys of relation class $r$ for query $i$ \\
$a_{ij}^{r}$ & within-group normalized attention weight \\
$\gamma_i^r$ & relation-level gate; equivalently, total attention mass assigned to group $r$ \\
$c_{ij}$ & common H/W-neutral evidence used only to compute $\gamma_i^r$ \\
\bottomrule
\end{tabular}
\end{table}

For a visual instance $m$, let $\calV^{(m)}$ denote its visual token grid and let
\begin{equation}
  \chi_m : \calV^{(m)} \rightarrow \R^2
\end{equation}
be a local image-plane coordinate chart for image patches or per-frame video patches. A token $i$ in instance $m$ has local screen-space coordinate
\begin{equation}
  \x_i^{(m)}
  =
  \chi_m(i)
  =
  (h_i^{(m)}, w_i^{(m)}) \in \R^2.
\end{equation}

\subsection{Spatial Well-Posedness}

The preprocessing pipeline always supplies numerical H/W indices, but numerical availability is weaker than intrinsic geometric comparability. We therefore make the modeling contract explicit. Let $\mathcal{G}$ declare, for selected visual pairs, maps $\psi_i$ and $\psi_j$ from their local charts into a common chart $\mathcal{C}_{ij}$. The contract may be induced by instance identity, external registration, or an intentional task-specific canonical screen coordinate.

\begin{definition}[Spatially comparable pair under $\mathcal{G}$]
A query--key pair $(i,j)$ is spatially comparable under the declared chart contract $\mathcal{G}$ if both tokens are visual and a common chart has been supplied:
\begin{equation}
  \Shared_{\mathcal{G}}(i,j)
  \iff
  M_i=M_j=\Vision
  \ \land\ 
  \exists\,(\mathcal{C}_{ij},\psi_i,\psi_j)\in\mathcal{G}.
  \label{eq:shared-chart}
\end{equation}
\end{definition}

For such pairs the declared spatial displacement is
\begin{equation}
  \Delta \x_{ij}^{\mathcal{G}}
  =
  \psi_j(\x_j)-\psi_i(\x_i)
  =
  (\Delta h_{ij}^{\mathcal{G}},\Delta w_{ij}^{\mathcal{G}})
  \label{eq:intra-delta}
\end{equation}
a valid object in the declared chart. The default contract is instance local: if $I_i=I_j$, both maps are the identity on that instance's chart. Externally registered images may also enter the spatial branch. Conversely, normalized screen coordinates across unrelated images are treated as a convention rather than an intrinsic relation unless the task explicitly opts into that convention. Text--vision pairs remain non-spatial because text has no image-plane coordinate. For video, a shared chart should be interpreted as screen-space geometry rather than a strict physical-world manifold; camera or object motion can make physical correspondence approximate.

\section{Spatial Well-Posedness and Gauge Invariance}

This section gives the main theoretical reason for instance-local gating as a default. It is mathematically possible to subtract normalized screen coordinates from two images, but the result is convention dependent: independent chart reparameterizations change it unless a coupling between charts is declared. The theorem below concerns this \emph{intrinsicness}, not the mere existence of two numbers that can be subtracted.

\subsection{Gauge Freedom of Visual Instances}

\begin{assumption}[Independent chart gauges]
Each visual instance $m$ admits an independent change of coordinate origin
\begin{equation}
  \x^{(m)} \mapsto \x^{(m)}+b_m,
  \qquad b_m\in\R^2,
  \label{eq:gauge}
\end{equation}
that changes the chart representation but not the semantic content of the instance.
\end{assumption}

Translation freedom alone is sufficient for the argument below. Cropping, padding, resizing, and per-image normalization introduce additional chart changes, but claiming invariance to all such transformations would be stronger than standard axial RoPE itself supports. Unless the task provides an explicit alignment between images, each instance may at least choose its origin independently.

\begin{proposition}[Intra-instance displacement is translation-gauge invariant]
If $i$ and $j$ belong to the same visual instance $m$, then under any shared translation $\x^{(m)}\mapsto \x^{(m)}+b_m$,
\begin{equation}
  \Delta \x_{ij}^{(m)}
  =
  \x_j^{(m)}-\x_i^{(m)}
\end{equation}
is invariant.
\end{proposition}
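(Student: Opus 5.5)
The plan is a direct substitution argument using the gauge action of Assumption~1, equation~\eqref{eq:gauge}. Fix a visual instance $m$ and two tokens $i,j$ with $I_i=I_j=m$. The central point is that the assumption assigns one translation $b_m\in\R^2$ per instance, not per token. Hence the same vector $b_m$ acts on every coordinate in the chart $\chi_m$. I would first write out the transformed coordinates, $\tilde{\x}_i^{(m)}=\x_i^{(m)}+b_m$ and $\tilde{\x}_j^{(m)}=\x_j^{(m)}+b_m$, making explicit that both use the identical $b_m$.

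Next I would compute the transformed displacement. Because the chart coordinates live in $\R^2$, which is an abelian group under addition,
\begin{equation*}
  \tilde{\x}_j^{(m)}-\tilde{\x}_i^{(m)}
  = \bigl(\x_j^{(m)}+b_m\bigr)-\bigl(\x_i^{(m)}+b_m\bigr)
  = \x_j^{(m)}-\x_i^{(m)}
  = \Delta\x_{ij}^{(m)}.
\end{equation*}
This holds componentwise, so $\Delta h_{ij}^{(m)}$ and $\Delta w_{ij}^{(m)}$ are each invariant. It follows that the height and width phases $R_h(\Delta h_{ij})$ and $R_w(\Delta w_{ij})$ in~\eqref{eq:mrope} are unchanged for same-instance pairs.

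I would close with a short remark linking the result to the default contract $\mathcal{G}$. With $\psi_i=\psi_j=\mathrm{id}$, the declared displacement~\eqref{eq:intra-delta} coincides with $\Delta\x_{ij}^{(m)}$, so it is also gauge invariant. I would also note the contrast that motivates the next result: if $I_i\neq I_j$, the two tokens receive independent translations $b_{I_i}$ and $b_{I_j}$, and the raw difference shifts by $b_{I_j}-b_{I_i}$.

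There is no real obstacle here. The only point needing care is the bookkeeping: the statement's phrase ``shared translation'' must be read as a single $b_m$ applied to the entire chart $\chi_m$. Once that reading is fixed, the result is a one-line cancellation.
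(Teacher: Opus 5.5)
Your proposal is correct and is the same argument as the paper's proof: substitute the common shift $b_m$ into both coordinates and observe that it cancels in the difference. Your extra remarks on the H/W phases, the default contract, and the contrast with cross-instance pairs agree with the surrounding results, but the proposition does not need them.
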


\begin{proof}
After the transformation,
\begin{equation}
  (\x_j^{(m)}+b_m)-(\x_i^{(m)}+b_m)
  =
  \x_j^{(m)}-\x_i^{(m)}.
\end{equation}
Thus a same-instance relative displacement does not depend on the arbitrary coordinate origin.
\end{proof}

\begin{theorem}[Raw cross-instance displacement is not intrinsic]
Let $i$ belong to visual instance $m$ and $j$ belong to visual instance $n\neq m$. Suppose no registration map $\phi_{m\rightarrow n}:\calV^{(m)}\rightarrow\calV^{(n)}$ is given. Then
\begin{equation}
  \x_j^{(n)}-\x_i^{(m)}
  \label{eq:cross-delta}
\end{equation}
is not a gauge-invariant geometric object.
\end{theorem}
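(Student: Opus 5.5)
The plan is to mirror the proof of the preceding Proposition and show that, unlike the same-instance case, the two translation gauges do not cancel. First I will make ``gauge-invariant geometric object'' precise. Under the Assumption on independent chart gauges, the gauge group acting on the pair of instances is $\R^2\times\R^2$, with elements $(b_m,b_n)$ acting by $\x^{(m)}\mapsto\x^{(m)}+b_m$ and $\x^{(n)}\mapsto\x^{(n)}+b_n$ independently. A quantity built from chart coordinates is intrinsic exactly when it is unchanged for every element of this group. The hypothesis that no registration $\phi_{m\to n}$ is given is used only here: it means $\mathcal{G}$ imposes no coupling between $b_m$ and $b_n$, so the full product group is admissible.

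Next I compute the transformed quantity directly:
\begin{equation*}
  (\x_j^{(n)}+b_n)-(\x_i^{(m)}+b_m)
  =
  \bigl(\x_j^{(n)}-\x_i^{(m)}\bigr)+(b_n-b_m).
\end{equation*}
Invariance would require $b_n-b_m=0$ for all admissible gauges. Choosing $b_m=0$ and any nonzero $b_n$, for instance $b_n=(1,0)$, gives a change of $b_n\neq 0$. The displacement therefore varies under a legitimate gauge transformation and is not gauge invariant.

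I will strengthen the result by noting that the orbit of \eqref{eq:cross-delta} under the gauge group is all of $\R^2$. Any prescribed target displacement $v$ is reached by taking $b_n-b_m=v-(\x_j^{(n)}-\x_i^{(m)})$. The raw cross-instance displacement therefore carries no chart-independent information. I will also remark that the diagonal subgroup $b_m=b_n$ does preserve it, and this subgroup is exactly what a declared shared chart or registration enforces. This contrast links the theorem to Definition~\eqref{eq:shared-chart}.

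The main obstacle is not the algebra but the conceptual step in the first paragraph: justifying that, absent a registration map, the independent translations $b_m,b_n$ are both admissible. I will handle this by appealing directly to the Assumption and to the absence of any element of $\mathcal{G}$ coupling the two charts, stating that a given registration would restrict the gauge to the diagonal. As a short corollary, I may add that RoPE logits using $R(\Delta h),R(\Delta w)$ on such pairs then depend on the gauge, because a rotation by a nonzero frequency times $b_n-b_m$ is generically nontrivial.
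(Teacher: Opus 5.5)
Your proposal is correct and follows essentially the same argument as the paper's proof. You apply independent translations $b_m,b_n$, observe that the displacement shifts by $b_n-b_m$, and use the independence of the two gauges to conclude that it is not invariant. Your additions (the orbit being all of $\R^2$, and the diagonal subgroup $b_m=b_n$ as the stabilizer) are consistent refinements of the paper's remark that $b_n-b_m$ can be arbitrary unless $\mathcal{G}$ couples the charts.
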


\begin{proof}
Under independent translations of the two coordinate charts,
\begin{equation}
  \x_i^{(m)}\mapsto \x_i^{(m)}+b_m,
  \qquad
  \x_j^{(n)}\mapsto \x_j^{(n)}+b_n.
\end{equation}
The apparent cross-instance displacement becomes
\begin{equation}
  (\x_j^{(n)}+b_n)-(\x_i^{(m)}+b_m)
  =
  (\x_j^{(n)}-\x_i^{(m)})+(b_n-b_m).
\end{equation}
Because $b_m$ and $b_n$ are independent, $b_n-b_m$ can be arbitrary. Therefore the value of Eq.~\eqref{eq:cross-delta} is determined by chart choices rather than by instance-independent geometry. A preprocessing convention can still assign it a numerical value, but that value is not intrinsic unless $\mathcal{G}$ couples the two charts.
\end{proof}

\begin{corollary}[M-RoPE cross-instance phases are gauge-dependent]
For visual instances $m\neq n$, any height/width RoPE term of the form
\begin{equation}
  \inner{\q_i^h}{R_h(h_j^{(n)}-h_i^{(m)})\kvec_j^h}
  +
  \inner{\q_i^w}{R_w(w_j^{(n)}-w_i^{(m)})\kvec_j^w}
\end{equation}
depends on arbitrary chart choices unless a cross-instance registration is supplied.
\end{corollary}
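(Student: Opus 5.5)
The plan is to reduce the corollary to the preceding theorem on raw cross-instance displacement, and then to push the gauge shift through the RoPE rotation. Fix visual instances $m\neq n$ with no registration supplied. By the assumption on independent chart gauges, instance $m$ can be translated by $b_m=(\beta^h_m,\beta^w_m)$ and instance $n$ by $b_n=(\beta^h_n,\beta^w_n)$, each chosen independently. By the proof of the theorem, the height and width arguments then become $(h_j^{(n)}-h_i^{(m)})+\delta_h$ and $(w_j^{(n)}-w_i^{(m)})+\delta_w$. Here $(\delta_h,\delta_w)=b_n-b_m$ can be any vector in $\R^2$. The content vectors $\q_i,\kvec_j$ do not depend on the chart, so the only thing a gauge change affects is the rotation argument.

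Next I use the homomorphism property of the RoPE rotations, $R_a(p+\delta)=R_a(p)R_a(\delta)$. This lets me write the transformed term as
\begin{equation*}
  \inner{\q_i^h}{R_h(\Delta h)R_h(\delta_h)\kvec_j^h}
  +
  \inner{\q_i^w}{R_w(\Delta w)R_w(\delta_w)\kvec_j^w}.
\end{equation*}
So the term is a function $f(\delta_h,\delta_w)$. On each two-dimensional frequency block it has the form $A\cos(\theta\delta)+B\sin(\theta\delta)$, with coefficients fixed by the content and by the raw displacement.

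The step that needs care is showing the term really changes, not merely that its argument does. I would show that $f$ is not constant in $\delta$ unless $A=B=0$ on every block. A nonzero trigonometric sum in $\delta$ with frequencies $\theta\neq0$ cannot be constant in $\delta$. Choose $\delta_h=\pi/\theta$ for a single block to flip that block's sign, or argue by linear independence of the characters $e^{i\theta\delta}$ for distinct frequencies. The degenerate case $\q^h=0$ or $\kvec^h=0$ (and similarly for $w$) gives a term that is trivially constant. I would handle it by reading ``depends on chart choices'' as generic dependence. That is, for generic content vectors the value varies with $b_n-b_m$, while the displacement argument itself always varies by the theorem.

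Finally, I would note the converse. If $\mathcal{G}$ supplies a coupling, so that $b_m,b_n$ are constrained (for example $b_m=b_n$ as in the intra-instance proposition), then $\delta=0$ and the term is invariant. This shows that the registration condition in the statement is exactly what is needed.
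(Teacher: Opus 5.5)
Your proposal is correct and uses the same reduction as the paper. The paper gives no separate proof; it treats the corollary as immediate from the preceding theorem. The H/W rotation arguments are exactly the components of the raw displacement $\x_j^{(n)}-\x_i^{(m)}$, and these shift by the arbitrary vector $b_n-b_m$. So the paper establishes gauge dependence of the \emph{phase} and stops there.

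Your extra step proves something the paper does not: that the logit \emph{value} moves. You write each frequency block as $A\cos(\omega\delta)+B\sin(\omega\delta)$. By the paper's identity $q^\top R(\theta)k=(q^\top k)\cos\theta+(q^\top Jk)\sin\theta$, the pair $(A,B)$ is a rotation of $(q^\top k,\,q^\top Jk)$. Hence $A=B=0$ exactly when $q$ or $k$ vanishes on that block. Linear independence of $1,\cos(\omega_k\delta),\sin(\omega_k\delta)$ then shows the term is non-constant unless every H/W block is degenerate. Your ``generic dependence'' reading is the right way to handle that degenerate case, where the value really is chart independent.

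Two small corrections:
\begin{itemize}
\item The single-block sign flip $\delta_h=\pi/\omega$ cannot isolate one block, because all height blocks share the same $\delta_h$. Rely on the linear-independence argument instead. It needs distinct nonzero frequencies within each axis, which holds for standard RoPE schedules.
\item Your closing converse is looser than stated, though it is not part of the statement. A general registration, such as a known nonzero offset between the charts, does not force $b_m=b_n$ on the raw charts. Instead the maps $\psi_i,\psi_j$ co-transform, so the declared displacement $\psi_j(\x_j)-\psi_i(\x_i)$ is invariant. The raw difference appearing in the displayed term can still move.
\end{itemize}
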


This is the reason RIG-RoPE avoids ordinary height/width rotation for text--vision pairs and, by default, for visual pairs across different instances. Registered cross-instance pairs are an explicit exception. An unregistered relation should not be confused with zero displacement: identity rotation removes the H/W phase, but inside a shared flat softmax it can also change score statistics relative to genuinely rotated pairs.

\subsection{Null--Zero Conflation}

Removing an unregistered displacement does not yet specify how its logit should compete with a valid spatial logit. The most direct fallback sets the unsupported displacement to zero, hence $R(0)=I$. This is not generally calibrated inside a shared softmax.

\begin{theorem}[Self-aligned null-relation privilege]
Consider one two-dimensional RoPE frequency block with frequency $\omega$ and identical content vectors $u$ in the query and key. Let relation $A$ be a spatially registered pair with displacement $\Delta$, and let relation $B$ be unregistered but represented by identity fallback. Then
\begin{align}
  s_A &= u^\top R(\omega\Delta)u
      = \lVert u\rVert^2\cos(\omega\Delta),\\
  s_B &= u^\top Iu=\lVert u\rVert^2,
\end{align}
and therefore $s_B\geq s_A$. The inequality is strict whenever $\omega\Delta\notin2\pi\mathbb{Z}$ and $u\neq0$.
\end{theorem}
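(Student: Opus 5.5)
The plan is a direct computation inside a single two-dimensional frequency block, followed by an elementary inequality for the cosine.

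First, write the block rotation explicitly as
\[
R(\theta)=\begin{pmatrix}\cos\theta & -\sin\theta\\ \sin\theta & \cos\theta\end{pmatrix}
= \cos\theta\, I + \sin\theta\, J,
\qquad
J=\begin{pmatrix}0&-1\\1&0\end{pmatrix}.
\]
Then compute $u^\top R(\theta)u=\cos\theta\,u^\top u+\sin\theta\,u^\top J u$. The key observation is that $J$ is skew-symmetric, so $u^\top J u = (u^\top J u)^\top = u^\top J^\top u = -u^\top J u$, hence $u^\top J u=0$. With $\theta=\omega\Delta$ this gives $s_A=\lVert u\rVert^2\cos(\omega\Delta)$. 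Setting $\theta=0$, so that $R(0)=I$, gives $s_B=\lVert u\rVert^2$, which is the identity-fallback value for the unregistered relation $B$.

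Second, subtract the two scores:
\[
s_B-s_A=\lVert u\rVert^2\bigl(1-\cos(\omega\Delta)\bigr).
\]
Both factors are nonnegative, so $s_B\ge s_A$. For strictness, $\lVert u\rVert^2>0$ exactly when $u\neq0$, and $1-\cos\theta>0$ exactly when $\theta\notin2\pi\mathbb{Z}$, because $\cos\theta=1$ characterizes the integer multiples of $2\pi$. A product of two strictly positive reals is strictly positive, which gives the strict inequality under the stated conditions.

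No step is a genuine obstacle. The only point needing care is the vanishing of the cross term $u^\top J u$: it holds because query and key share the same content vector $u$. This is also what restricts the theorem to self-aligned pairs, consistent with the remark in the introduction that the ordering need not hold for arbitrary content vectors. I would close by noting that the sign convention of $R$, whether $\Delta$ or $-\Delta$, does not affect the result, since cosine is even.
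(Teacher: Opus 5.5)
Your proof is correct and follows the same route as the paper: the skew-symmetric sine part of $R(\theta)$ vanishes in the quadratic form, leaving $\lVert u\rVert^2\cos\theta$, and $\cos\theta\le 1$ gives $s_B\ge s_A$. You justify strictness more explicitly than the paper does, while the paper adds that over several frequency blocks the gap is the nonnegative sum $\sum_k \lVert u_k\rVert^2\,[1-\cos(\omega_k\Delta)]$.
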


\begin{proof}
For a planar rotation, the skew-symmetric sine component vanishes in the quadratic form $u^\top R(\theta)u$, leaving $\lVert u\rVert^2\cos\theta$. Since $\cos\theta\leq1$, the claim follows. For a multidimensional RoPE subspace, the gap is the sum
\begin{equation}
  s_B-s_A
  =
  \sum_k \lVert u_k\rVert^2
  \left[1-\cos(\omega_k\Delta)\right]
  \geq 0.
  \label{eq:null-privilege-gap}
\end{equation}
\end{proof}

The theorem is a deterministic existence result for the self-aligned limiting case. It does \emph{not} state that identity rotation maximizes $q^\top R(\theta)k$ for arbitrary $q$ and $k$. In general,
\begin{equation}
  q^\top R(\theta)k
  =
  (q^\top k)\cos\theta+(q^\top Jk)\sin\theta,
  \qquad
  J=\begin{bmatrix}0&-1\\1&0\end{bmatrix},
  \label{eq:general-rotary-bilinear}
\end{equation}
so a nonzero rotation may align two initially nonparallel vectors better than the identity. The result nevertheless exposes a structural failure mode: a missing relation is assigned the unique phase that maximizes self-similarity.

\begin{proposition}[Expected privilege under isotropic correlation]
For one two-dimensional frequency block, assume
\begin{equation}
  \mathbb{E}[kq^\top]=\beta I,
  \qquad \beta>0.
  \label{eq:isotropic-correlation}
\end{equation}
Then
\begin{equation}
  \mathbb{E}[q^\top R(\theta)k]
  =2\beta\cos\theta
  \leq 2\beta
  =\mathbb{E}[q^\top k].
  \label{eq:expected-null-privilege}
\end{equation}
\end{proposition}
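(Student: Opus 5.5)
The proof uses only linearity of expectation and the trace, together with the identity $\mathbb{E}[q^\top A k]=\operatorname{tr}(A\,\mathbb{E}[kq^\top])$ for any fixed matrix $A$. First I will rewrite the scalar $q^\top R(\theta)k$ as a trace, $q^\top R(\theta)k=\operatorname{tr}(q^\top R(\theta)k)=\operatorname{tr}(R(\theta)\,kq^\top)$. Since $R(\theta)$ is deterministic and both trace and matrix multiplication are linear, taking expectations gives
\begin{equation*}
  \mathbb{E}[q^\top R(\theta)k]=\operatorname{tr}\bigl(R(\theta)\,\mathbb{E}[kq^\top]\bigr)=\beta\operatorname{tr}R(\theta).
\end{equation*}
Here I use the isotropic-correlation assumption~\eqref{eq:isotropic-correlation}, which needs only first-moment integrability of $kq^\top$.

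Next, the trace of the planar rotation is computed directly: $\operatorname{tr}R(\theta)=2\cos\theta$, so the expectation equals $2\beta\cos\theta$. The same computation at $\theta=0$, where $R(0)=I$, gives $\mathbb{E}[q^\top k]=\beta\operatorname{tr}I=2\beta$. This is exactly the identity-fallback score of the unregistered branch. The inequality then follows from $\cos\theta\le 1$ and $\beta>0$, with equality iff $\theta\in2\pi\mathbb{Z}$.

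An equivalent route uses the decomposition~\eqref{eq:general-rotary-bilinear}. Under the assumption, $\mathbb{E}[q^\top k]=\operatorname{tr}(\beta I)=2\beta$ and $\mathbb{E}[q^\top Jk]=\beta\operatorname{tr}J=0$, because $J$ is skew-symmetric. This shows directly that the sine term vanishes in expectation. I will mention this route because it mirrors the proof of the self-aligned theorem.

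There is no real obstacle. The only point needing care is the convention: the assumption is stated for $\mathbb{E}[kq^\top]$ rather than $\mathbb{E}[qk^\top]$, so the trace must be cycled in the right order. Since $\beta I$ is symmetric, this choice does not affect the result.
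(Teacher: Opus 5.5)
Your proof is correct and is essentially the paper's argument: both rewrite $q^\top R(\theta)k$ as $\operatorname{tr}(R(\theta)kq^\top)$, use linearity of expectation to obtain $\beta\operatorname{tr}R(\theta)=2\beta\cos\theta$, and conclude from $\cos\theta\le 1$ with $\beta>0$. Your alternative check through the cosine/sine decomposition, using $\mathbb{E}[q^\top Jk]=\beta\operatorname{tr}J=0$, is a valid extra confirmation.
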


\begin{proof}
Using $q^\top R(\theta)k=\operatorname{tr}(R(\theta)kq^\top)$ and Eq.~\eqref{eq:isotropic-correlation}, the expectation is $\beta\operatorname{tr}(R(\theta))=2\beta\cos\theta$.
\end{proof}

Equation~\eqref{eq:expected-null-privilege} establishes an average bias only under the stated distributional assumption. Together, the theorem and proposition motivate explicit calibration; they do not imply that every unregistered key outranks every registered key.

Two further observations are deferred to Appendix~\ref{app:null-results}. No fixed content-independent fallback reproduces every valid rotary displacement pointwise, and Gaussian displacement marginalization models noisy registration rather than the absence of a shared chart. These results narrow the role of fixed fallbacks without implying that relation-stratified normalization is the only possible solution.

\section{Why Static Coordinates and a Flat Softmax Are Insufficient}

\begin{theorem}[Static assignment incompatibility]
Assume one static H/W coordinate assignment and shared non-degenerate RoPE frequencies are used for all pairs, without relation-conditioned routing. The following two requirements cannot both hold for all content vectors:
\begin{enumerate}[leftmargin=1.5em]
  \item \textbf{Intra-instance spatial faithfulness:} distinct patch displacements in one image induce non-identical H/W rotations.
  \item \textbf{Text--vision spatial invariance:} for a fixed text query and equal visual content placed at any two patch coordinates, the H/W operator in the text--vision logit is identical.
\end{enumerate}
\end{theorem}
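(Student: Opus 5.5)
The argument is by contradiction, and the key observation is that under a static assignment the H/W operator of any pair depends only on the coordinate difference of its two tokens. Concretely, a static assignment gives every token $i$, text or visual, a single pair $(h_i,w_i)$ that does not depend on which query it is paired with. Without relation-conditioned routing, every pair uses the same frequencies, so the H/W factor of any logit is
\[
  \mathcal{R}_{ij}=R_h(h_j-h_i)\oplus R_w(w_j-w_i),
\]
a block-diagonal operator acting on $(\kvec_j^h,\kvec_j^w)$. This holds whether $i$ is text or vision. I would first record this as the common form of the operator, appealing to Eq.~\eqref{eq:mrope}.

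\textbf{Step 1 (invariance forces equal coordinates).} Fix a text query $i$ with coordinates $(h_i,w_i)$. Take two patches $j,j'$ of one image whose displacement $(h_{j'}-h_j,w_{j'}-w_j)$ is nonzero, and place equal visual content at both. Requirement~2 says the text--vision H/W operators coincide:
\[
  R_h(h_j-h_i)\oplus R_w(w_j-w_i)=R_h(h_{j'}-h_i)\oplus R_w(w_{j'}-w_i).
\]
Rotations compose additively, so this is equivalent to $R_h(h_{j'}-h_j)\oplus R_w(w_{j'}-w_j)=I$. The requirement must hold for all content vectors, which rules out any escape through a content vector orthogonal to the rotated subspace. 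The statement therefore has to be read at the level of operators, not at the level of one scalar logit.

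\textbf{Step 2 (contradiction with faithfulness).} The operator identity from Step~1 is exactly $\mathcal{R}_{jj'}=I=\mathcal{R}_{jj}$. So the two distinct displacements $(h_{j'}-h_j,w_{j'}-w_j)$ and $0$, both realized inside one image (pair $(j,j')$ and pair $(j,j)$), induce identical H/W rotations. This contradicts Requirement~1. Alternatively, one can compare the pairs $(j,j')$ and $(j',j)$: their operators are inverse to each other, and both equal $I$.

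\textbf{Main obstacle: non-degeneracy.} The delicate point is that a nonzero displacement could still give the identity rotation if every $\omega_k\Delta\in2\pi\mathbb{Z}$. I would make ``non-degenerate'' precise as follows: for every nonzero displacement $\Delta$ realized within one instance, at least one H or W frequency satisfies $\omega_k\Delta\notin2\pi\mathbb{Z}$. This holds, for example, when some frequency is incommensurate with $2\pi$, or when all $|\omega_k\Delta|<2\pi$ on the finite grid. The same assumption is what makes Requirement~1 satisfiable at all, so this reading matches the intended statement.

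With that assumption in place, Steps~1 and~2 yield the contradiction. The conclusion is that the two requirements can only be reconciled if the H/W operator depends on the relation type of the pair, which is the routing that RIG-RoPE introduces.
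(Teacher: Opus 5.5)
Your proof is correct and follows the paper's route: requirement 2 equates the two text--vision H/W operators, and cancelling the text-side rotation gives $R_h(h_{j'}-h_j)\oplus R_w(w_{j'}-w_j)=I$, which is the paper's $R_h(h_{p_1})=R_h(h_{p_2})$ done jointly rather than axis by axis; this forces a nonzero intra-image displacement to have the same rotation as the zero displacement, contradicting requirement 1. Treating H and W together is slightly cleaner than the paper's ``the width axis is analogous,'' and strictly the contradiction already follows from requirement 1 as stated, so your non-degeneracy condition is needed only to make that requirement satisfiable, not to derive the contradiction.
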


\begin{proof}
Consider the height axis. Text--vision invariance requires
\begin{equation}
  R_h(h_{p_1}-h_a)=R_h(h_{p_2}-h_a)
\end{equation}
for any two patch coordinates $h_{p_1},h_{p_2}$, because the statement must hold for every query and equal key content. Multiplying by the inverse text-side rotation gives $R_h(h_{p_1})=R_h(h_{p_2})$. Under non-degenerate frequencies this collapses the distinction between those patch coordinates, contradicting intra-instance spatial faithfulness. The width axis is analogous. Hence the operator must depend on relation type, not only on static token coordinates.
\end{proof}

Pairwise routing alone is still incomplete if the routed scores share one denominator. Identity-routed unregistered logits and rotated registered logits can have different phase statistics, as Eq.~\eqref{eq:null-privilege-gap} demonstrates in the self-aligned case. The remaining problem is therefore \emph{calibration across heterogeneous softmax groups}: relation-specific H/W scores should not determine one another's total probability mass without an explicit calibration rule.

\section{Representation-Aware Traversal Coordinates}
\label{sec:duration}

The spatial argument above asks whether a spatial displacement is well defined. The temporal argument asks a different question: what scalar coordinate should order tokens that belong to heterogeneous representation geometries? We introduce a \emph{multimodal traversal coordinate} $\tau$. It measures represented context extent on the RoPE axis. It is not defined as wall-clock time, human reading time, semantic information content, or flattened token count. Human inspection provides intuition, but the mathematical object is determined only by the modality regime and the post-tokenization structure.

This distinction is important for videos. Throughout this paper,
\begin{equation}
  F_b
  \equiv
  \text{the number of temporal tokens in video block $B_b$ after visual tokenization},
  \label{eq:F-definition}
\end{equation}
including any temporal patching or merging performed by the visual tokenizer. It is neither the number of decoded source frames nor a content-dependent ``effective frame'' estimate. Under a fixed tokenizer, the construction is therefore content independent and deterministic.

\subsection{Ordered Slices and Parallel Tokens}

The relevant distinction is not tensor rank but the role of each axis in traversal. We decompose every block $B_b$ into $U_b$ \emph{ordered slices}. Tokens within one slice coexist on parallel spatial axes. Text has one token per ordered slice; an image has one ordered slice containing its H/W grid; a video has $F_b$ ordered temporal slices, each containing one H/W grid.

We impose the following structural requirements.

\begin{assumption}[Traversal structure]
For a fixed tokenizer and modality regime:
\begin{enumerate}[leftmargin=1.5em]
  \item \textbf{Ordered-axis additivity.} Concatenating consecutive ordered slices adds their extents.
  \item \textbf{Parallel-axis sublinearity.} Increasing the number of simultaneous spatial tokens may increase the extent, but no faster than their characteristic linear spatial scale.
  \item \textbf{Content independence.} Two blocks with the same modality label and the same post-tokenization grid receive the same coordinates, regardless of semantic content.
  \item \textbf{Text compatibility.} Each text token contributes one unit, so a text-only sequence exactly recovers ordinary one-dimensional RoPE offsets.
  \item \textbf{Phase-range compatibility.} At a reference tokenizer grid, the traversal extent is anchored to the position range already seen by the base model rather than introducing an arbitrary new phase scale.
\end{enumerate}
\end{assumption}

Let $\ell(H,W)>0$ be a characteristic spatial scale satisfying first-order scale covariance,
\begin{equation}
  \ell(cH,cW)=c\,\ell(H,W),
  \qquad c>0.
  \label{eq:scale-covariance}
\end{equation}
Useful choices include
\begin{equation}
  \ell_{\mathrm{area}}(H,W)=\sqrt{HW},\qquad
  \ell_{\mathrm{diag}}(H,W)=\sqrt{H^2+W^2},\qquad
  \ell_{\max}(H,W)=\max(H,W).
  \label{eq:spatial-scales}
\end{equation}
The area scale is aspect-ratio agnostic and recovers the original proposal; the diagonal and maximum-axis scales remain sensitive to elongated grids. The theory requires scale covariance, not one unique choice.

Ordered-axis additivity implies
\begin{equation}
  D_b
  =
  \sum_{u=1}^{U_b} d_{b,u}.
  \label{eq:additive-duration}
\end{equation}
For modality regime $m$, choose a fixed reference grid with scale $\ell_m^\star$ and a reference per-slice extent $d_m^\star$. We use the scale-covariant family
\begin{equation}
  d_{b,u}
  =
  d_{m_b}^\star
  \left(
    \frac{\ell(H_{b,u},W_{b,u})}{\ell_{m_b}^\star}
  \right)^{\beta_{m_b}},
  \label{eq:slice-duration}
\end{equation}
where $0\leq\beta_m\leq1$ controls the sublinear contribution of parallel spatial scale. Combining Eqs.~\eqref{eq:additive-duration} and~\eqref{eq:slice-duration} gives
\begin{equation}
  \boxed{
  D_b
  =
  d_{m_b}^\star
  \sum_{u=1}^{U_b}
  \left(
    \frac{\ell(H_{b,u},W_{b,u})}{\ell_{m_b}^\star}
  \right)^{\beta_{m_b}}
  }.
  \label{eq:unified-duration}
\end{equation}
For text we use the convention $H=W=\ell_T^\star=d_T^\star=1$ and $\beta_T=0$. The power law is an explicit inductive bias, not a uniquely forced theorem. This parameterization removes the redundancy between an arbitrary normalization count and an arbitrary multiplier: the reference grid is fixed, while $d_m^\star$ has the direct meaning of extent at that grid.

\subsection{Text, Image, and Video Specializations}

For a text token, the convention above gives
\begin{equation}
  D_b^{T}=1.
  \label{eq:text-duration}
\end{equation}

For an image with an $H_b\times W_b$ post-tokenization grid, all patches belong to one parallel slice. We take $\beta_I=1$ so that image extent follows a characteristic linear scale:
\begin{equation}
  D_b^{I}
  =
  D_I^\star
  \left(
    \frac{\ell(H_b,W_b)}{\ell(H_I^\star,W_I^\star)}
  \right).
  \label{eq:image-duration-new}
\end{equation}
With $\ell=\sqrt{HW}$, Eq.~\eqref{eq:image-duration-new} becomes
\begin{equation}
  D_b^I=D_I^\star
  \sqrt{\frac{H_bW_b}{H_I^\star W_I^\star}},
  \label{eq:image-area-specialization}
\end{equation}
which is a linear scale under similarity resizing but not the geometric diameter of an arbitrarily elongated grid. Equation~\eqref{eq:spatial-scales} makes that modeling choice explicit.

For a video, the ordered slices are the temporal tokens output by the visual tokenizer. If temporal slice $f$ has spatial grid $H_{b,f}\times W_{b,f}$, then
\begin{equation}
  D_b^{V}
  =
  d_V^\star
  \sum_{f=1}^{F_b}
  \left(
    \frac{\ell(H_{b,f},W_{b,f})}{\ell(H_V^\star,W_V^\star)}
  \right)^{\beta_V},
  \qquad
  0\leq\beta_V\leq1.
  \label{eq:video-duration-general}
\end{equation}
For a fixed spatial grid this reduces to
\begin{equation}
  \boxed{
  D_b^{V}
  =
  d_V^\star F_b
  \left(
    \frac{\ell(H_b,W_b)}{\ell(H_V^\star,W_V^\star)}
  \right)^{\beta_V}
  }.
  \label{eq:video-duration-fixed}
\end{equation}
The linear factor $F_b$ follows from ordered-axis additivity. At $\beta_V=0$, video extent depends only on temporal-token count. At $\beta_V=1$, each temporal slice contributes an image-like linear spatial scale. Intermediate values express the content-independent prior that a continuous stream may need a weaker per-slice spatial correction than a standalone image. For $\ell=\sqrt{HW}$, the original exponent is recovered by $\gamma_V=\beta_V/2$.

Phase-range compatibility fixes the remaining scale without introducing two redundant constants. Choose $D_I^\star$ and $d_V^\star$ so the reference image and reference video reproduce, as closely as possible, the scalar position spans used by the base model:
\begin{equation}
  D_I^\star\approx D_{I,\mathrm{base}}^\star,
  \qquad
  F_V^\star d_V^\star\approx D_{V,\mathrm{base}}^\star.
  \label{eq:phase-range-calibration}
\end{equation}
The construction is therefore a one-parameter family in $\beta_V$ once the base model, tokenizer, reference grids, and spatial-scale function are fixed. Any fixed member introduces no additional learned parameters; learning the calibration is an optional extension.

\subsection{Why Isotropic Token-Volume Scaling Is Insufficient}

A tempting unified rule is $D=N^{1/d}$, yielding exponents $1$, $1/2$, and $1/3$ for text, images, and videos. This recovers a characteristic linear scale from an isotropic token volume, but video is not isotropic: its temporal axis is ordered, while H/W tokens within a slice are parallel. In particular, the rule
\begin{equation}
  D_{mathrm{iso}}=(FHW)^{1/3}
\end{equation}
assigns the same extent to videos with the same total token count even when their temporal token counts differ. It also fails temporal-partition additivity. Splitting a video with $N$ tokens into two equal temporal pieces changes the total from $N^{1/3}$ to
\begin{equation}
  2\left(\frac{N}{2}\right)^{1/3}
  =2^{2/3}N^{1/3}.
\end{equation}
Moreover, distributing $D_{mathrm{iso}}$ over $F$ temporal slices produces adjacent spacing proportional to $F^{-2/3}$ at fixed H/W, so the local coordinate scale changes with the total clip length. Equation~\eqref{eq:video-duration-fixed} removes all three artifacts by treating temporal and spatial axes anisotropically.

\subsection{Coordinates Within a Block}

Each block occupies $[a_b,a_b+D_b]$, where $a_b$ is defined in Eq.~\eqref{eq:block-anchor}. Text and image tokens are placed at the block center:
\begin{equation}
  \tau_i
  =
  a_{b(i)}+\frac{1}{2}D_{b(i)},
  \qquad
  B_{b(i)}\in\{\Text,\Image\}.
  \label{eq:text-image-tau-new}
\end{equation}
Thus all patches of one image share the same traversal coordinate. Their H/W relations are encoded only in the spatial RoPE subspaces; raster order does not create a fictitious temporal order.

For video slice $f$, define
\begin{equation}
  d_{b,f}^{V}
  =
  d_V^\star
  \left(
    \frac{\ell(H_{b,f},W_{b,f})}{\ell(H_V^\star,W_V^\star)}
  \right)^{\beta_V}.
  \label{eq:video-local-duration}
\end{equation}
Every spatial token $i$ whose temporal-token index is $f_i$ receives
\begin{equation}
  \boxed{
  \tau_i
  =
  a_{b(i)}
  +
  \sum_{r<f_i}d_{b(i),r}^{V}
  +
  \frac{1}{2}d_{b(i),f_i}^{V}
  }.
  \label{eq:video-tau-new}
\end{equation}
All spatial patches in the same temporal slice share $\tau_i$. For fixed resolution,
\begin{equation}
  \tau_i
  =
  a_{b(i)}
  +
  \left(f_i-\frac{1}{2}\right)
  d_V^\star
  \left(
    \frac{\ell(H_b,W_b)}{\ell(H_V^\star,W_V^\star)}
  \right)^{\beta_V}.
  \label{eq:video-tau-fixed}
\end{equation}

An optional conservative interpolation is performed at the slice-extent level, never between $\tau_i$ and the flattened raster index. Let $d_{b,u}^{\mathrm{base}}=1$ denote a unit extent for each ordered slice and let $d_{b,u}^{\mathrm{trav}}$ be Eq.~\eqref{eq:slice-duration}. Then
\begin{equation}
  d_{b,u}^{(\lambda)}
  =
  (1-\lambda)d_{b,u}^{\mathrm{base}}
  +
  \lambda d_{b,u}^{\mathrm{trav}},
  \qquad \lambda\in[0,1].
  \label{eq:blockwise-interpolation}
\end{equation}
Coordinates are reconstructed by cumulative sums of $d_{b,u}^{(\lambda)}$. This preserves image simultaneity and video partition additivity for every $\lambda$. Direct interpolation with a flattened token index is excluded because it would reintroduce raster-order pseudo-time inside an image.

\begin{figure}[t]
\centering
\begin{tikzpicture}[x=0.82cm,y=0.75cm,font=\small]
  \draw[-{Latex[length=2mm]},thick] (0,0) -- (14.2,0) node[right] {$\tau$};
  \fill[blue!12] (0,0.25) rectangle (1.2,1.15);
  \fill[blue!12] (1.2,0.25) rectangle (2.4,1.15);
  \draw (0,0.25) rectangle (1.2,1.15);
  \draw (1.2,0.25) rectangle (2.4,1.15);
  \node at (0.6,0.7) {$T_1$};
  \node at (1.8,0.7) {$T_2$};
  \fill[green!14] (2.4,0.25) rectangle (6.6,1.15);
  \draw (2.4,0.25) rectangle (6.6,1.15);
  \node at (4.5,0.82) {image patches};
  \fill (4.5,0.38) circle (1.6pt);
  \node[below] at (4.5,0.25) {one shared center};
  \fill[orange!15] (6.6,0.25) rectangle (8.8,1.15);
  \fill[orange!22] (8.8,0.25) rectangle (11.0,1.15);
  \fill[orange!30] (11.0,0.25) rectangle (13.6,1.15);
  \draw (6.6,0.25) rectangle (8.8,1.15);
  \draw (8.8,0.25) rectangle (11.0,1.15);
  \draw (11.0,0.25) rectangle (13.6,1.15);
  \node at (7.7,0.7) {$V_1$};
  \node at (9.9,0.7) {$V_2$};
  \node at (12.3,0.7) {$V_3$};
  \fill (7.7,0.38) circle (1.6pt);
  \fill (9.9,0.38) circle (1.6pt);
  \fill (12.3,0.38) circle (1.6pt);
  \draw[decorate,decoration={brace,amplitude=4pt}] (0,1.35) -- node[above=4pt] {unit text extents} (2.4,1.35);
  \draw[decorate,decoration={brace,amplitude=4pt}] (2.4,1.35) -- node[above=4pt] {$D_I$} (6.6,1.35);
  \draw[decorate,decoration={brace,amplitude=4pt}] (6.6,1.35) -- node[above=4pt] {$\sum_f d_{V,f}$} (13.6,1.35);
\end{tikzpicture}
\caption{Traversal-axis construction. Text advances by unit blocks; every patch of one image shares its block center; video advances additively over temporal-token slices while H/W patches within each slice remain simultaneous.}
\label{fig:traversal-axis}
\end{figure}

\subsection{Formal Properties}

\begin{proposition}[Pure-text preservation]
For a text-only sequence, traversal coordinates recover ordinary one-dimensional relative offsets exactly.
\end{proposition}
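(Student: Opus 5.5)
The plan is to compute the traversal coordinates of a text-only sequence directly from the definitions and show they differ from the ordinary indices by a single global constant. Since RoPE logits depend only on coordinate differences (Eq.~\eqref{eq:rope-relative}), such a constant shift leaves every relative offset unchanged.

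First I would fix the block structure. In a text-only sequence every block $B_b$ is a single text token, so the blocks are indexed by $b=1,\dots,n$ in sequence order and $b(i)=i$. By the text convention ($H=W=\ell_T^\star=d_T^\star=1$, $\beta_T=0$) applied to Eq.~\eqref{eq:unified-duration} with $U_b=1$, I get $D_b=1$ for every $b$, which is Eq.~\eqref{eq:text-duration}. The one point to check is that the factor $(\ell(1,1)/1)^{0}$ equals $1$ regardless of the choice of $\ell$. This holds because $\beta_T=0$, so the scale-covariance choice among Eq.~\eqref{eq:spatial-scales} is irrelevant.

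Next I would substitute into the anchor definition, Eq.~\eqref{eq:block-anchor}, to get $a_b=\sum_{r<b}1=b-1$. The placement rule Eq.~\eqref{eq:text-image-tau-new} then gives $\tau_i=(i-1)+\tfrac12=i-\tfrac12$. Hence $\tau_j-\tau_i=j-i=s_j-s_i$ for all pairs, and by Eq.~\eqref{eq:rope-relative} every text--text logit coincides with ordinary one-dimensional RoPE.

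I expect no real obstacle; the main care point is conventions. If the optional interpolation in Eq.~\eqref{eq:blockwise-interpolation} is used, both $d^{\mathrm{base}}$ and $d^{\mathrm{trav}}$ equal $1$ for text slices, so the result holds for every $\lambda$. I would also remark that in a text-only sequence every pair has relation class $T$. The relation-stratified machinery therefore reduces to a single group with full one-dimensional RoPE, so the offset identity is the only thing to verify. A nonzero absolute origin, such as $\tfrac12$ or a base model starting at index $0$, is harmless because RoPE is relative.
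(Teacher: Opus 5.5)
Your proof is correct and follows the paper's argument. Each text block has $D_b=1$, so the anchors are consecutive integers and the center placement gives $\tau_i$ equal to the token index plus a constant; the paper uses zero-based indexing to get $\tau_i=i+\tfrac12$ where you get $i-\tfrac12$. Either way $\tau_j-\tau_i=j-i$, and your added remarks on the interpolation parameter $\lambda$ and the single active group $T$ are valid extras that the paper's proof does not spell out.
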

\begin{proof}
Every block is one text token with $D_b=1$. Under zero-based block indexing, $\tau_i=i+1/2$, hence $\tau_j-\tau_i=j-i$.
\end{proof}

\begin{proposition}[Image simultaneity]
For any two patches $i,j$ in the same image, $\tau_j-\tau_i=0$ independently of flattening order.
\end{proposition}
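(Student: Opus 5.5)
The plan is to argue directly from the definition of the traversal coordinate for text and image blocks, Eq.~\eqref{eq:text-image-tau-new}. The argument rests on the fact that $\tau_i$ is a function of the block $b(i)$ alone, not of the token's position inside the block. I would carry it out in three short steps.

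\textbf{Step 1: both patches have the same anchor and extent.} Let $i$ and $j$ be two patches of the same image. By the block construction of Section~4, one image instance is exactly one semantic block, so $b(i)=b(j)=:b$ and $m_b=\Image$. Consequently the anchor $a_b$ from Eq.~\eqref{eq:block-anchor} and the extent $D_b$ are common to both tokens. By Eq.~\eqref{eq:image-duration-new}, $D_b$ depends only on the post-tokenization grid $(H_b,W_b)$, the reference grid, and $D_I^\star$. The anchor $a_b$ depends only on the extents of the blocks preceding $b$.

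\textbf{Step 2: both patches have the same coordinate.} Substituting into Eq.~\eqref{eq:text-image-tau-new} gives
\[
\tau_i = a_b + \tfrac12 D_b = \tau_j ,
\]
and hence $\tau_j-\tau_i=0$.

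\textbf{Step 3: independence of flattening order.} I would note explicitly that none of $a_b$, $D_b$, or the formula for $\tau$ refers to the raster index of a patch within $B_b$. Permuting the flattening order of the patches inside the image therefore leaves every $\tau_i$ unchanged. Such a permutation also does not change which blocks precede $b$, so $a_b$ is unaffected. This is the point at which the paper's choice of slice-level rather than token-level placement matters.

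I would close with a remark covering the conservative interpolation, Eq.~\eqref{eq:blockwise-interpolation}. An image has a single ordered slice, and coordinates there are rebuilt from cumulative sums of slice extents $d_{b,u}^{(\lambda)}$. All patches of the image therefore again receive the same center $a_b+\tfrac12 d_{b,1}^{(\lambda)}$, so simultaneity holds for every $\lambda\in[0,1]$.

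There is no real obstacle. The only care needed is to make the "independently of flattening order" clause precise. It should be read as invariance of $\tau$ under any reordering of tokens within the block, with the relative order of blocks held fixed.
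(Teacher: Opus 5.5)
Your proposal is correct and matches the paper's proof, which simply observes that both patches receive the common block center $a_b+\tfrac12 D_b$ from Eq.~\eqref{eq:text-image-tau-new}. Your added remarks on flattening-order invariance and on the slice-level interpolation of Eq.~\eqref{eq:blockwise-interpolation} are sound elaborations, with the anchor in the interpolated case read as the prefix sum of the interpolated extents.
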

\begin{proof}
Both tokens are assigned the common block center by Eq.~\eqref{eq:text-image-tau-new}.
\end{proof}

\begin{proposition}[Temporal-partition additivity]
Partition a video after temporal token $F_1$, without changing its post-tokenization slices. If the two segments contain the first $F_1$ and remaining $F_2$ slices, respectively, then
\begin{equation}
  D^{V}(1{:}F_1+F_2)
  =
  D^{V}(1{:}F_1)+D^{V}(F_1+1{:}F_1+F_2).
  \label{eq:video-additivity}
\end{equation}
\end{proposition}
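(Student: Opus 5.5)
The plan is to unfold each side of Eq.~\eqref{eq:video-additivity} with the general video extent Eq.~\eqref{eq:video-duration-general} and then split the finite sum. The key observation is that the per-slice extent $d_{b,f}^{V}$ in Eq.~\eqref{eq:video-local-duration} depends only on the post-tokenization grid $(H_{b,f},W_{b,f})$ of slice $f$ and on fixed constants. Those constants are $d_V^\star$, $\ell(H_V^\star,W_V^\star)$, $\beta_V$, and the choice of $\ell$, all determined by the modality regime and tokenizer and not by the block's length. This is exactly the content-independence and fixed-reference requirement of the traversal-structure assumption. So I will write $D^{V}(p{:}q)=\sum_{f=p}^{q} d_f^{V}$, where $d_f^{V}$ is the extent of slice $f$.

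With that notation, the left-hand side is $\sum_{f=1}^{F_1+F_2} d_f^{V}$. I will split this sum at $F_1$. The first segment consists of the slices $1,\dots,F_1$ with the same grids, so its extent is $\sum_{f=1}^{F_1} d_f^{V}$.

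The second segment is a fresh video block whose local slice index runs over $1,\dots,F_2$, corresponding to global indices $F_1+1,\dots,F_1+F_2$. Since the partition leaves the slices themselves unchanged, each local slice $g$ has grid $(H_{F_1+g},W_{F_1+g})$. Hence its extent is $\sum_{g=1}^{F_2} d^{V}_{F_1+g}=\sum_{f=F_1+1}^{F_1+F_2} d_f^{V}$. Adding the two segment extents gives the left-hand side, which proves the identity. The same argument, specialized through Eq.~\eqref{eq:video-duration-fixed}, gives $d_V^\star(F_1+F_2)\rho=d_V^\star F_1\rho+d_V^\star F_2\rho$ with $\rho=(\ell(H_b,W_b)/\ell(H_V^\star,W_V^\star))^{\beta_V}$.

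The only step needing care is the claim that re-indexing the second segment does not change any $d_f^{V}$. This is where the contrast with the isotropic rule $(FHW)^{1/3}$ lies: there the per-slice contribution depends on the total $F$, whereas here no normalization involves $F_b$ or any block-level total. I will state this explicitly by pointing to the fact that $\ell_V^\star$ is a fixed reference-grid constant. I will also note, though it is not needed for the equality, that the resulting token coordinates are consistent. Using Eq.~\eqref{eq:video-tau-new} and the anchor Eq.~\eqref{eq:block-anchor}, the second segment's anchor equals the unsplit anchor plus $\sum_{f\le F_1} d_f^{V}$. Every token's $\tau_i$ is therefore unchanged by the split.
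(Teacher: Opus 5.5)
Your proposal is correct and matches the paper's proof: both sides are the same finite sum of the per-slice terms in Eq.~\eqref{eq:video-duration-general}, and each term depends only on its own slice grid and fixed regime constants. The paper also notes that the identity holds for the interpolated extents of Eq.~\eqref{eq:blockwise-interpolation}, since interpolation is applied per slice before summation; your remark that every $\tau_i$ is unchanged by the split is a correct extra that the equality does not need.
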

\begin{proof}
Both sides are the same sum of the per-slice terms in Eq.~\eqref{eq:video-duration-general}. The property also holds for Eq.~\eqref{eq:blockwise-interpolation} because interpolation is applied before summation.
\end{proof}

\begin{proposition}[Local video-scale consistency]
For a fixed H/W grid, the offset between temporal slices $f$ and $f+k$ is
\begin{equation}
  \tau_{f+k}-\tau_f
  =
  kd_V^\star
  \left(
    \frac{\ell(H,W)}{\ell(H_V^\star,W_V^\star)}
  \right)^{\beta_V},
  \label{eq:local-scale-consistency}
\end{equation}
which is independent of the total video length $F_b$.
\end{proposition}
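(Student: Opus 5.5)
The plan is to read the offset directly off the closed-form slice coordinate in Eq.~\eqref{eq:video-tau-new}. Both tokens sit in the same video block $B_b$, so the anchor $a_b$ is common to both. With the grid fixed at $H\times W$ in every slice, Eq.~\eqref{eq:video-local-duration} makes every per-slice extent equal to one constant,
\begin{equation}
  \delta \;:=\; d_V^\star\left(\frac{\ell(H,W)}{\ell(H_V^\star,W_V^\star)}\right)^{\beta_V}.
\end{equation}
I would state this explicitly first: $\delta$ depends only on the fixed grid, the reference grid, $d_V^\star$ and $\beta_V$, and not on the slice index or on $F_b$.

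Substituting into Eq.~\eqref{eq:video-tau-new} gives $\tau_f = a_b + (f-1)\delta + \tfrac12\delta$, which is the fixed-resolution form in Eq.~\eqref{eq:video-tau-fixed}. Subtracting the expression for slice $f$ from the one for slice $f+k$ does three things at once. The anchor $a_b$ cancels. The half-slice centering terms $\tfrac12\delta$ cancel. The difference of the two prefix sums $\sum_{r<f+k}\delta-\sum_{r<f}\delta$ reduces to the $k$ terms indexed $r=f,\dots,f+k-1$. The result is $k\delta$, which is Eq.~\eqref{eq:local-scale-consistency}.

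Independence from $F_b$ then follows by inspection. The total length $F_b$ enters only through $D_b^V$, which affects the anchors of \emph{later} blocks and the range of valid indices, $1\le f<f+k\le F_b$. It does not appear in $\delta$ or in $a_b$.

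I expect no real obstacle. The only point needing care is to insist that the per-slice extent is defined slice-locally, with no normalization by $F_b$; that is exactly what separates it from the isotropic rule criticized earlier. I would also remark that the same cancellation gives the result for the interpolated extents of Eq.~\eqref{eq:blockwise-interpolation}, with $\delta$ replaced by $(1-\lambda)+\lambda\delta$.
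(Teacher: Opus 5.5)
Your proposal is correct and is essentially the paper's proof. The paper also subtracts the fixed-resolution coordinate of Eq.~\eqref{eq:video-tau-fixed} at slices $f$ and $f+k$, so that the anchor $a_b$ and the half-slice offsets cancel and only $k$ copies of the $F_b$-independent per-slice extent remain; your derivation of that form from Eq.~\eqref{eq:video-tau-new} and your interpolated-extent remark are correct elaborations, not a different route.
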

\begin{proof}
Subtract Eq.~\eqref{eq:video-tau-fixed} at the two slice indices. The anchor and half-slice offsets cancel.
\end{proof}

\begin{proposition}[Structural monotonicity]
$D_b^I$ increases under uniform enlargement of the image grid. $D_b^V$ increases strictly with $F_b$; it is independent of H/W when $\beta_V=0$ and increases under uniform spatial enlargement when $\beta_V>0$.
\end{proposition}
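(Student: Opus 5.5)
The plan is to read each claim directly off the closed forms in Eqs.~\eqref{eq:image-duration-new} and~\eqref{eq:video-duration-fixed}. The only structural inputs needed are scale covariance of $\ell$, Eq.~\eqref{eq:scale-covariance}, and positivity of the reference constants $D_I^\star$, $d_V^\star$, $\ell_I^\star$, $\ell_V^\star$. Positivity is implicit in the requirement that every block has a positive traversal extent.

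\emph{Image part.} A uniform enlargement sends $(H_b,W_b)\mapsto(cH_b,cW_b)$ with $c>1$. By Eq.~\eqref{eq:scale-covariance}, $\ell(cH_b,cW_b)=c\,\ell(H_b,W_b)$. Because $\beta_I=1$, Eq.~\eqref{eq:image-duration-new} gives $D_b^I\mapsto c\,D_b^I$. Since $D_b^I>0$ (as $\ell>0$ and $D_I^\star>0$), this is a strict increase.

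\emph{Video part, temporal monotonicity.} I would use the general sum in Eq.~\eqref{eq:video-duration-general} rather than the fixed-grid form, so the claim also covers variable-resolution slices. Appending a temporal slice adds one term $d_{b,F_b+1}^V$. Each per-slice term in Eq.~\eqref{eq:video-local-duration} is strictly positive, since it is $d_V^\star>0$ times a positive power of a positive ratio. Hence $D_b^V$ is strictly increasing in $F_b$; this is the same additivity used in Eq.~\eqref{eq:video-additivity}. For a fixed grid, Eq.~\eqref{eq:video-duration-fixed} shows directly that $D_b^V$ is linear in $F_b$ with positive slope.

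\emph{Video part, spatial dependence.} When $\beta_V=0$, every factor $(\cdot)^{\beta_V}$ equals $1$, so $D_b^V=d_V^\star F_b$, which does not depend on H/W. When $\beta_V>0$, a uniform enlargement by $c>1$ multiplies each per-slice ratio by $c$, by scale covariance applied slice-wise. It therefore multiplies $D_b^V$ by $c^{\beta_V}>1$, again a strict increase because $D_b^V>0$.

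The only point needing care, and the closest thing to an obstacle, is the meaning of ``increases'' and its scope. Monotonicity is asserted only along uniform enlargements. Under non-uniform changes such as $(H,W)\mapsto(cH,W)$, scale covariance says nothing. I would add a remark that for the listed scales $\ell_{\mathrm{area}}$, $\ell_{\mathrm{diag}}$, $\ell_{\max}$, which are coordinatewise nondecreasing, monotonicity also holds weakly under separate enlargement of either axis. The proof itself uses only Eq.~\eqref{eq:scale-covariance}.
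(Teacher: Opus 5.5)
Your proposal is correct and takes essentially the same route as the paper. The paper's one-line proof also derives all three claims from positivity of the reference extents, scale covariance in Eq.~\eqref{eq:scale-covariance}, and the power-law per-slice terms; you spell out the resulting factors $c$ and $c^{\beta_V}$ and the strictly positive appended slice terms explicitly.
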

\begin{proof}
The result follows from positivity of the reference extents, scale covariance in Eq.~\eqref{eq:scale-covariance}, and the power-law terms.
\end{proof}

\subsection{Single-Slice Video Is Not an Image}

The formulation does not impose the cross-modal boundary condition
\begin{equation}
  D^V(F=1,H,W)=D^I(H,W).
  \label{eq:no-degeneration}
\end{equation}
A one-temporal-token video remains in the video observation regime: its modality label, tokenizer path, calibration constant, and exponent are those of video. Consequently,
\begin{equation}
  D^V(1,H,W)
  =
  d_V^\star
  \left(
    \frac{\ell(H,W)}{\ell(H_V^\star,W_V^\star)}
  \right)^{\beta_V}
\end{equation}
need not equal Eq.~\eqref{eq:image-duration-new}. This is deliberate. A static image is modeled as one inspectable spatial object, whereas a video temporal token is one slice of a stream representation, even when the stream happens to contain only one slice. Both are centered inside their own block, but equality of centers does not require equality of block extents. Enforcing equality for every H/W would force $\beta_V=1$, a shared scale function, identical reference grids, and matched calibration, eliminating the intended freedom to model video as a distinct observation regime.

\subsection{Representation Time and Physical Time}

Equations~\eqref{eq:F-definition}--\eqref{eq:video-duration-fixed} define \emph{representation time}. If the same physical clip is tokenized into more temporal tokens, its model-space extent increases. This behavior is intentional in the core method: each retained temporal token is an additional ordered representation slice. It should not be described as a claim that denser sampling makes the physical clip longer or literally slows playback.

Applications requiring sampling-rate invariance or timestamp fidelity can replace the unit ordered measure by timestamp weights. If temporal token $f$ covers physical interval $\Delta t_f$ and $t_0$ is a reference duration, an optional physical-time-aware extension is
\begin{equation}
  D_{b,\mathrm{phys}}^{V}
  =
  d_V^\star
  \sum_{f=1}^{F_b}
  \frac{\Delta t_f}{t_0}
  \left(
    \frac{\ell(H_{b,f},W_{b,f})}{\ell(H_V^\star,W_V^\star)}
  \right)^{\beta_V}.
  \label{eq:physical-time-extension}
\end{equation}
This extension changes the semantics of the axis and should be evaluated separately. The main RIG-RoPE formulation uses representation time.

\subsection{Relation to M-RoPE and RoPE Semantics}

M-RoPE already gives images a non-unit influence on subsequent positions: the next modality begins after the maximum temporal, height, or width ID of the preceding modality~\citep{wang2024qwen2vl}. For a roughly square image, its induced span is therefore comparable in order to $\max(H,W)\approx\sqrt{HW}$. Equation~\eqref{eq:image-duration-new} should not be presented as the first resolution-dependent image span. Its role is to make the span an explicit scalar metric, calibrate it to the base phase range through Eq.~\eqref{eq:phase-range-calibration}, and decouple it from relation-specific H/W coordinates. This calibration follows the broader requirement to preserve textual priors and positional coherence when modifying multimodal RoPE~\citep{huang2025revisiting}. The larger distinction appears for video: Eq.~\eqref{eq:video-duration-general} separates ordered temporal accumulation from parallel spatial scale and guarantees Eqs.~\eqref{eq:video-additivity} and~\eqref{eq:local-scale-consistency}.

Finally, a larger $|\tau_j-\tau_i|$ does not imply monotonically weaker attention. RoPE produces oscillatory relative phases:
\begin{equation}
  s_{ij}^{t}
  =
  \inner{\q_i^t}{R_t(\tau_j-\tau_i)\kvec_j^t}.
\end{equation}
Traversal extent therefore defines a \emph{relative-phase structural prior}, not an attention-decay law. Any claim of improved downstream attention or accuracy remains empirical.

\section{Method: RIG-RoPE}

\subsection{Relation Partition}

Figure~\ref{fig:relation-routing} summarizes the method. Pair typing determines which positional operator and within-group normalizer is used; a separate common statistic determines total group mass.

\begin{figure}[t]
\centering
\resizebox{\linewidth}{!}{%
\begin{tikzpicture}[
  node distance=5mm and 7mm,
  box/.style={draw,rounded corners,align=center,minimum height=8mm,minimum width=25mm,font=\small},
  arr/.style={-{Latex[length=2mm]},thick}
]
\node[box,fill=gray!10] (pair) {query--key pair\\$M,I,\mathcal G$};
\node[box,fill=blue!8,right=of pair,yshift=10mm] (t) {$T$: text--text\\1D traversal RoPE};
\node[box,fill=green!10,right=of pair] (s) {$S$: shared chart\\T/H/W RoPE};
\node[box,fill=orange!12,right=of pair,yshift=-10mm] (u) {$U$: unregistered\\no H/W phase};
\node[box,fill=purple!8,right=14mm of s] (norm) {within-relation\\softmax $o_i^r$};
\node[box,fill=red!7,right=of norm] (gate) {common evidence\\gate $\gamma_i^r$};
\node[box,fill=gray!10,right=of gate] (out) {$o_i=\sum_r\gamma_i^r o_i^r$};
\draw[arr] (pair) -- (t);
\draw[arr] (pair) -- (s);
\draw[arr] (pair) -- (u);
\draw[arr] (t) -- (norm);
\draw[arr] (s) -- (norm);
\draw[arr] (u) -- (norm);
\draw[arr] (norm) -- (gate);
\draw[arr] (gate) -- (out);
\end{tikzpicture}
}
\caption{Relation-stratified attention. Spatial phases are used only under a declared shared chart. Relation-specific scores rank keys within a group; H/W-neutral common evidence allocates total mass across active groups.}
\label{fig:relation-routing}
\end{figure}

Let $\mathcal{C}_i$ be the keys visible to query $i$ after the ordinary causal and padding masks. RIG-RoPE assigns every $j\in\mathcal{C}_i$ to exactly one relation class:
\begin{align}
  \calJ_i^T
  &=
  \{j\in\mathcal{C}_i:M_i=M_j=\Text\},\\
  \calJ_i^S
  &=
  \{j\in\mathcal{C}_i:\Shared_{\mathcal G}(i,j)\},\\
  \calJ_i^U
  &=
  \mathcal{C}_i\setminus(\calJ_i^T\cup\calJ_i^S).
  \label{eq:relation-partition}
\end{align}
Here $T$ denotes native text order, $S$ a declared spatial relation, and $U$ an unregistered relation. Under the default instance-local contract, $\Shared_{\mathcal G}(i,j)$ reduces to $M_i=M_j=\Vision$ and $I_i=I_j$. For a text query, only $T$ and $U$ can be nonempty. For a visual query, only $S$ and $U$ can be nonempty. Thus at most two groups are active for any query. Registered cross-instance pairs or intentionally shared normalized coordinates are supported by changing $\mathcal G$, not by changing the attention equations.

\subsection{Relation-Specific Logits}

Let
\begin{equation}
  \Delta\tau_{ij}=\tau_j-\tau_i.
  \label{eq:delta-tau}
\end{equation}
All logits below use the same $1/\sqrt d$ scaling.

For native text relations,
\begin{equation}
  s_{ij}^{T}
  =
  \frac{1}{\sqrt d}
  \inner{\q_i}{R_{\mathrm{1D}}(\Delta\tau_{ij})\kvec_j},
  \qquad j\in\calJ_i^T.
  \label{eq:text-score-v2}
\end{equation}
This uses the full head dimension and reduces exactly to ordinary text RoPE on text-only sequences.

For spatially registered visual relations,
\begin{align}
  s_{ij}^{S}
  =\frac{1}{\sqrt d}
  \big[
  &\inner{\q_i^t}{R_t(\Delta\tau_{ij})\kvec_j^t}
  +\inner{\q_i^h}{R_h(\Delta h_{ij}^{\mathcal G})\kvec_j^h}
  \nonumber\\[-2pt]
  &+\inner{\q_i^w}{R_w(\Delta w_{ij}^{\mathcal G})\kvec_j^w}
  \big],
  \qquad j\in\calJ_i^S.
  \label{eq:spatial-score-v2}
\end{align}
This retains M-RoPE H/W rotations on the domain declared spatially comparable while replacing the temporal-axis offset by $\Delta\tau_{ij}$. Because $\q^a=P_a\q$ is defined by frequency-block selectors, Eqs.~\eqref{eq:spatial-score-v2}--\eqref{eq:unregistered-score-v2} do not assume contiguous channel chunks and apply equally to interleaved or head-wise axis allocation.

For unregistered relations,
\begin{align}
  s_{ij}^{U}
  =\frac{1}{\sqrt d}
  \big[
  &\inner{\q_i^t}{R_t(\Delta\tau_{ij})\kvec_j^t}
  +\inner{\q_i^h}{\kvec_j^h}
  +\inner{\q_i^w}{\kvec_j^w}
  \big],
  \qquad j\in\calJ_i^U.
  \label{eq:unregistered-score-v2}
\end{align}
The identity on the H/W-selected blocks is an explicitly typed \emph{absence of H/W transformation}, not a claim that $\Delta h=\Delta w=0$. This distinction is operational rather than merely verbal: $s^U$ is never placed in the same within-group softmax as $s^S$. Traversal-aware order remains in the temporal blocks, while all content channels remain available. A full-dimensional temporal rotation or a learned content/position decomposition can be studied as an ablation, but is not required by the core formulation.

\subsection{Within-Relation Normalization}

For every nonempty relation group $r\in\{T,S,U\}$, define
\begin{equation}
  a_{ij}^{r}
  =
  \frac{\exp(s_{ij}^{r})}
       {\sum_{k\in\calJ_i^r}\exp(s_{ik}^{r})},
  \qquad
  o_i^r
  =
  \sum_{j\in\calJ_i^r}a_{ij}^{r}v_j.
  \label{eq:within-relation}
\end{equation}
The spatial phase ranks keys only against keys governed by the same spatial semantics. In particular, identity fallback can no longer steal probability mass from a valid nonzero H/W rotation.

\subsection{Common-Evidence Relation Gate}

Separating softmax denominators moves the remaining decision to the relation level. The gate must not reconstruct the same heterogeneous H/W competition under another name. Let $G(\Delta\tau)$ apply traversal rotation to blocks in $\Omega_t$ and identity to blocks in $\Omega_h\cup\Omega_w$. We define the common score
\begin{equation}
  c_{ij}
  =
  \frac{1}{\sqrt d}
  \inner{\q_i}{G(\Delta\tau_{ij})\kvec_j}.
  \label{eq:common-content}
\end{equation}
This statistic is relation independent, contains no H/W coordinate difference, remains sensitive to traversal order, and is well defined for contiguous, interleaved, and head-wise frequency assignments. Using only $P_t\q$ or an unrotated full-head content dot product are natural ablations.

For a nonempty group, the default evidence is the common-score LogSumExp
\begin{equation}
  e_i^r
  =
  \log
  \sum_{j\in\calJ_i^r}
  \exp(c_{ij}).
  \label{eq:group-evidence}
\end{equation}
With fixed positive relation priors $\pi_r$,
\begin{equation}
  \gamma_i^r
  =
  \frac{\pi_r\exp(e_i^r)}
       {\sum_{u\in\mathcal{A}_i}\pi_u\exp(e_i^u)},
  \qquad
  \mathcal{A}_i=\{r:|\calJ_i^r|>0\}.
  \label{eq:relation-gate}
\end{equation}
The fixed default uses uniform $\pi_r=1$ and introduces no learned gate parameters. The final head output is
\begin{equation}
  o_i
  =
  \sum_{r\in\mathcal{A}_i}\gamma_i^r o_i^r.
  \label{eq:hierarchical-output}
\end{equation}
This default preserves the ordinary role of group cardinality: if two groups contain equally scored keys, the larger group receives proportionally more total mass. When explicit group balancing is desired, we instead define the optional variant
\begin{equation}
  e_{i,\mathrm{bal}}^r(\tau_g)
  =
  \tau_g\log\left[
    \frac{1}{|\calJ_i^r|}
    \sum_{j\in\calJ_i^r}
    \exp\!\left(\frac{c_{ij}}{\tau_g}\right)
  \right],
  \qquad \tau_g>0.
  \label{eq:balanced-evidence}
\end{equation}
Unlike Eq.~\eqref{eq:group-evidence}, Eq.~\eqref{eq:balanced-evidence} encodes the deliberate prior that groups with equal average evidence deserve equal total mass regardless of key count. The outer factor $\tau_g$ preserves the evidence scale as temperature changes. Optional per-head priors or query-conditioned gates add learned parameters and are extensions.

\subsection{Properties}

\begin{proposition}[Full connectivity and normalization]
Every visible key $j\in\mathcal{C}_i$ receives the final weight
\begin{equation}
  \alpha_{ij}=\gamma_i^{r_{ij}}a_{ij}^{r_{ij}},
\end{equation}
with $\alpha_{ij}>0$ for finite logits and
\begin{equation}
  \sum_{j\in\mathcal{C}_i}\alpha_{ij}=1.
\end{equation}
\end{proposition}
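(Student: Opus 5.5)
The plan is to first pin down what the final weight $\alpha_{ij}$ means, then check the two claims: positivity and summation to one. Substituting Eq.~\eqref{eq:within-relation} into Eq.~\eqref{eq:hierarchical-output} gives
\begin{equation}
  o_i=\sum_{r\in\mathcal{A}_i}\gamma_i^r\sum_{j\in\calJ_i^r}a_{ij}^r v_j .
\end{equation}
By Eq.~\eqref{eq:relation-partition}, the groups $\calJ_i^T,\calJ_i^S,\calJ_i^U$ are pairwise disjoint. $\calJ_i^U$ is defined as a set complement, and $\calJ_i^T\cap\calJ_i^S=\emptyset$ because $\Shared_{\mathcal G}$ requires both tokens to be visual, whereas $T$ requires both to be text. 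Their union is $\mathcal{C}_i$. So every visible key $j$ lies in exactly one group, $r_{ij}$, and that group is active, i.e.\ $r_{ij}\in\mathcal{A}_i$. The double sum therefore collapses to $\sum_{j\in\mathcal{C}_i}\gamma_i^{r_{ij}}a_{ij}^{r_{ij}}v_j$, which identifies $\alpha_{ij}$ as stated. In particular, no visible key is dropped and none is counted twice.

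For positivity, take finite logits $s_{ij}^r$ and $c_{ij}$. Then each exponential is strictly positive and finite. Each active group is nonempty, so its softmax denominator is a finite positive sum and $a_{ij}^r>0$. The evidence $e_i^r$ of Eq.~\eqref{eq:group-evidence} is the log of a finite positive sum, so it is finite and $\exp(e_i^r)>0$. The priors satisfy $\pi_r>0$, so every numerator in Eq.~\eqref{eq:relation-gate} is positive and so is the denominator over $\mathcal{A}_i$. Hence $\gamma_i^r>0$, and therefore $\alpha_{ij}>0$. The same argument covers the balanced evidence of Eq.~\eqref{eq:balanced-evidence}, which is also finite whenever the group is nonempty.

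For normalization, the plan is to split the sum over the partition:
\begin{equation}
  \sum_{j\in\mathcal{C}_i}\alpha_{ij}
  =\sum_{r\in\mathcal{A}_i}\gamma_i^r\sum_{j\in\calJ_i^r}a_{ij}^r
  =\sum_{r\in\mathcal{A}_i}\gamma_i^r=1.
\end{equation}
The inner sums equal one by construction of the within-group softmax. The outer sum equals one because Eq.~\eqref{eq:relation-gate} is itself a normalized softmax over $\mathcal{A}_i$. This needs $\mathcal{A}_i\neq\emptyset$, which holds whenever $\mathcal{C}_i\neq\emptyset$. That is the standing situation under causal masking, since the query sees at least itself.

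I expect no real obstacle here; the only delicate step is the partition claim. In particular, one must check that under an arbitrary contract $\mathcal G$ a registered pair never also lands in $T$. The visual-only requirement in Eq.~\eqref{eq:shared-chart} settles this.
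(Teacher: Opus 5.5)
Your proof is correct and follows the paper's own route: the relation sets partition $\mathcal{C}_i$, each within-group softmax sums to one, and the gate sums to one over $\mathcal{A}_i$, so the double sum collapses to $1$. You also make explicit three points the paper leaves implicit: the positivity argument for $\alpha_{ij}$, the disjointness of $\calJ_i^T$ and $\calJ_i^S$ via the visual-only requirement in $\Shared_{\mathcal G}$, and the need for $\mathcal{C}_i\neq\emptyset$ so that $\mathcal{A}_i$ is nonempty.
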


\begin{proof}
The sets in Eq.~\eqref{eq:relation-partition} form a disjoint partition. Each within-group softmax sums to one, and the relation gate sums to one over active groups. Therefore
\begin{equation}
  \sum_{j\in\mathcal{C}_i}\alpha_{ij}
  =
  \sum_{r\in\mathcal{A}_i}\gamma_i^r
  \sum_{j\in\calJ_i^r}a_{ij}^r
  =
  \sum_{r\in\mathcal{A}_i}\gamma_i^r=1.
\end{equation}
\end{proof}

\begin{proposition}[Common-score consistency]
Assume uniform relation priors and suppose every relation-specific score reduces to the common score, $s_{ij}^{r_{ij}}=c_{ij}$. Then the final weight equals standard global softmax on $c$:
\begin{equation}
  \alpha_{ij}
  =
  \frac{\exp(c_{ij})}
       {\sum_{k\in\mathcal C_i}\exp(c_{ik})}.
  \label{eq:common-score-consistency}
\end{equation}
\end{proposition}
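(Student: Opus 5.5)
The plan is to expand the final weight $\alpha_{ij}=\gamma_i^{r_{ij}}a_{ij}^{r_{ij}}$ from the full-connectivity proposition using the definitions in Eqs.~\eqref{eq:within-relation}, \eqref{eq:group-evidence}, and \eqref{eq:relation-gate}, and then observe a telescoping cancellation. Fix a query $i$ and a visible key $j\in\mathcal C_i$, and write $r=r_{ij}$. Because $\mathcal C_i$ is nonempty (it contains $j$) and the sets in Eq.~\eqref{eq:relation-partition} partition $\mathcal C_i$, the group $r$ is active, so $\gamma_i^r$ and $a_{ij}^r$ are both defined.

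First, apply the hypothesis $s_{ik}^{r}=c_{ik}$ for every $k\in\calJ_i^r$. The within-group weight becomes $a_{ij}^r=\exp(c_{ij})/Z_i^r$, where $Z_i^r=\sum_{k\in\calJ_i^r}\exp(c_{ik})$. Next, note that the default evidence of Eq.~\eqref{eq:group-evidence} is exactly $e_i^r=\log Z_i^r$, so $\exp(e_i^r)=Z_i^r$. With uniform priors $\pi_u=1$, the gate therefore reads $\gamma_i^r=Z_i^r/\sum_{u\in\mathcal A_i}Z_i^u$. Multiplying the two expressions cancels $Z_i^r$ and leaves $\exp(c_{ij})/\sum_{u\in\mathcal A_i}Z_i^u$.

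It remains to identify the denominator. Since the $\calJ_i^u$ for $u\in\mathcal A_i$ are disjoint and their union is $\mathcal C_i$ (inactive groups are empty and contribute nothing), we get $\sum_{u\in\mathcal A_i}Z_i^u=\sum_{k\in\mathcal C_i}\exp(c_{ik})$. This is Eq.~\eqref{eq:common-score-consistency}.

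There is no substantive obstacle. The only points needing care are bookkeeping ones. The first is confirming that the partition is disjoint and exhaustive, which holds by construction since $\calJ_i^U$ is defined as the complement; one should check that $\calJ_i^T$ and $\calJ_i^S$ are disjoint because $\Shared_{\mathcal G}$ requires both tokens to be visual. The second is stressing that the result uses the unbalanced LogSumExp evidence rather than Eq.~\eqref{eq:balanced-evidence}. With the balanced variant at $\tau_g=1$, the gate would carry an extra factor $1/|\calJ_i^r|$, and the cancellation would fail unless all groups had equal size.
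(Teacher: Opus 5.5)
Your proposal is correct and follows essentially the same route as the paper's proof. You define $Z_i^r=\sum_{k\in\calJ_i^r}\exp(c_{ik})$, use $\exp(e_i^r)=Z_i^r$ to obtain $\gamma_i^r=Z_i^r/\sum_u Z_i^u$, and cancel $Z_i^r$ against the within-group weight; your extra step of checking that the partition is disjoint and exhaustive (so that $\sum_u Z_i^u$ equals the global normalizer over $\mathcal C_i$) just makes explicit what the paper's short proof leaves implicit.
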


\begin{proof}
Let $Z_i^r=\sum_{j\in\calJ_i^r}\exp(c_{ij})$. Equation~\eqref{eq:group-evidence} gives $\exp(e_i^r)=Z_i^r$, so $\gamma_i^r=Z_i^r/\sum_u Z_i^u$. Multiplying by the within-group weight $\exp(c_{ij})/Z_i^r$ cancels $Z_i^r$ and yields Eq.~\eqref{eq:common-score-consistency}.
\end{proof}

\begin{proposition}[Explicit relation-mass control]
For every active relation $r$,
\begin{equation}
  \sum_{j\in\calJ_i^r}\alpha_{ij}=\gamma_i^r.
  \label{eq:mass-control}
\end{equation}
Consequently, arbitrary changes to the positional logits $s_{ij}^r$ can redistribute mass inside group $r$ but cannot change its total mass unless the common gate evidence changes.
\end{proposition}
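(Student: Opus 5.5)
The plan is to compute the group mass directly from the definition of the final weights in the full-connectivity proposition, and then read off which quantities the result can depend on.

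\textbf{Step 1: the mass identity.} Fix an active relation $r\in\mathcal{A}_i$. By the full-connectivity proposition, every $j\in\calJ_i^r$ has $r_{ij}=r$, because the sets in Eq.~\eqref{eq:relation-partition} form a disjoint partition of $\mathcal{C}_i$. Hence $\alpha_{ij}=\gamma_i^r a_{ij}^r$ for all $j\in\calJ_i^r$. The factor $\gamma_i^r$ does not depend on $j$, so it can be pulled out of the sum:
\begin{equation*}
  \sum_{j\in\calJ_i^r}\alpha_{ij}
  =\gamma_i^r\sum_{j\in\calJ_i^r}a_{ij}^r .
\end{equation*}
The within-group softmax in Eq.~\eqref{eq:within-relation} sums to one over a nonempty group, which gives Eq.~\eqref{eq:mass-control}. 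This is essentially one line, a restriction of the normalization computation already carried out for the full-connectivity proposition.

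\textbf{Step 2: the consequence.} I would make explicit the functional dependence of $\gamma_i^r$. By Eqs.~\eqref{eq:common-content}, \eqref{eq:group-evidence} and~\eqref{eq:relation-gate}, $\gamma_i^r$ is a function of only three things:
\begin{itemize}
  \item the fixed priors $\pi_u$;
  \item the partition $\{\calJ_i^u\}_{u\in\mathcal{A}_i}$, which is determined by $M$, $I$ and $\mathcal{G}$;
  \item the common scores $c_{ik}$, which use $G(\Delta\tau)$ and hence involve neither $\Delta h^{\mathcal G}$, $\Delta w^{\mathcal G}$ nor the identity fallback.
\end{itemize}
In particular, the relation-specific logits $s_{ij}^r$ from Eqs.~\eqref{eq:text-score-v2}--\eqref{eq:unregistered-score-v2} do not appear among these arguments. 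Therefore, if we replace $s^r$ by arbitrary finite $\tilde s^r$ while holding $c$, the priors and the partition fixed, every $\gamma_i^u$ is unchanged. By Step 1 the group masses are then unchanged, and only the within-group weights $a_{ij}^r$ move. The same argument covers the balanced evidence of Eq.~\eqref{eq:balanced-evidence}, since it too depends only on $c$ and $|\calJ_i^r|$.

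\textbf{Main obstacle.} The only delicate point is stating precisely what ``unless the common gate evidence changes'' means. The logits $s^r$ and $c$ share $\q_i$ and $\kvec_j$, so a change in the underlying content vectors alters both. I would therefore phrase the consequence as a statement about the map $(s,c)\mapsto\alpha$: the gate is a function of $c$ alone, and perturbations of the positional operators, namely the H/W rotations and the identity fallback, do not enter $c$.
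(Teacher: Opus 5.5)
Your proof is correct and is the paper's argument: sum $\alpha_{ij}=\gamma_i^r a_{ij}^r$ over $j\in\calJ_i^r$, then use that the within-group softmax sums to one. Your Step 2 shows that $\gamma_i^r$ depends only on the priors, the partition, and the common scores $c_{ik}$, which spells out the consequence the paper leaves implicit in the gate definition.
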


\begin{proof}
Sum $\alpha_{ij}=\gamma_i^r a_{ij}^r$ over $j\in\calJ_i^r$ and use $\sum_j a_{ij}^r=1$.
\end{proof}

\begin{corollary}[Removal of cross-group null-phase privilege]
Consider one spatially registered key and one unregistered key with identical common evidence $c_{ij}$, equal priors, and no other visible keys. Then $\gamma_i^S=\gamma_i^U=1/2$, independent of the valid H/W displacement $\Delta$. The unregistered key cannot gain cross-group mass merely because its H/W branch uses identity.
\end{corollary}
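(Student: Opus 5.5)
The plan is to compute the gate weights directly from their definition and observe that the H/W displacement never enters. Let $j_S$ denote the registered key and $j_U$ the unregistered key, so that $\calJ_i^S=\{j_S\}$ and $\calJ_i^U=\{j_U\}$. By hypothesis there are no other visible keys, so $\mathcal{A}_i=\{S,U\}$ and $\calJ_i^T=\emptyset$. For a singleton group, the evidence in Eq.~\eqref{eq:group-evidence} reduces to $e_i^S=c_{ij_S}$ and $e_i^U=c_{ij_U}$. The same holds for the balanced variant in Eq.~\eqref{eq:balanced-evidence}: with $|\calJ_i^r|=1$ we get $\tau_g\log\exp(c/\tau_g)=c$. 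The corollary therefore holds under either evidence definition.

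Substituting into Eq.~\eqref{eq:relation-gate} with $\pi_S=\pi_U$ gives
\[
\gamma_i^S=\frac{\exp(c_{ij_S})}{\exp(c_{ij_S})+\exp(c_{ij_U})}.
\]
Since $c_{ij_S}=c_{ij_U}$, this equals $1/2$, and likewise $\gamma_i^U=1/2$.

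For independence from $\Delta$, note that $c_{ij}$ in Eq.~\eqref{eq:common-content} is built from $G(\Delta\tau_{ij})$. That operator rotates only the $\Omega_t$ blocks and acts as the identity on $\Omega_h\cup\Omega_w$, so no $\Delta h^{\mathcal G}$ or $\Delta w^{\mathcal G}$ appears in $c$. The displacement enters only through $s^S_{ij_S}$ in Eq.~\eqref{eq:spatial-score-v2}. That score is used only inside the within-group softmax, which on a singleton group gives $a^S_{ij_S}=1$ regardless of its value.

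Finally, I would invoke Proposition (Explicit relation-mass control) to conclude that the total mass on each key is $\alpha_{ij}=\gamma_i^r=1/2$. By the same proposition, any change to the identity-fallback logit $s^U$, including the self-aligned privilege of Theorem (Self-aligned null-relation privilege), redistributes mass only inside group $U$ and cannot shift mass across groups.

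The only subtle point is the interpretation of ``identical common evidence''. I read it as $c_{ij_S}=c_{ij_U}$, which is all the computation needs; everything else is direct substitution.
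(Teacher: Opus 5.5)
Your proposal is correct and matches the paper's intended argument. The paper gives no separate proof; it places the corollary directly after the Explicit relation-mass control proposition as a consequence of it and the gate definitions. With singleton groups the evidence reduces to $e_i^S=c_{ij_S}$ and $e_i^U=c_{ij_U}$, so equal priors give $\gamma_i^S=\gamma_i^U=1/2$. Since $c_{ij}$ uses $G(\Delta\tau_{ij})$, which acts as the identity on the H/W blocks, and $s^S$ enters only the within-group softmax, the displacement $\Delta$ cannot move mass between groups. Your observation that the balanced evidence also reduces to $c$ on singleton groups is a small, valid extension.
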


\begin{proposition}[Gauge invariance of unregistered interactions]
Without a registration map, the unregistered logit in Eq.~\eqref{eq:unregistered-score-v2}, its common gate evidence in Eq.~\eqref{eq:common-content}, and its final relation mass are invariant to independent translations of the visual coordinate charts.
\end{proposition}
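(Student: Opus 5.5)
The argument reduces to one observation: none of the three quantities in the statement depends on the H/W chart coordinates $\x_i^{(m)}$ at all. Since they are not functions of those coordinates, they are unchanged by the chart translations $\x^{(m)}\mapsto\x^{(m)}+b_m$ of Eq.~\eqref{eq:gauge}, for arbitrary independent $b_m$. I will verify this dependence structure term by term.

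\textbf{Step 1: the relation partition.} The partition in Eq.~\eqref{eq:relation-partition} is built from $M_i$, $I_i$, and the contract $\mathcal G$. The contract is a declared list of pairs with charts, and it contains no registration for the pair in question. None of these data involves H/W values, so translating any chart leaves $\calJ_i^T$, $\calJ_i^S$ and $\calJ_i^U$ unchanged. In particular, an unregistered key stays in $\calJ_i^U$.

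\textbf{Step 2: the unregistered logit and the common evidence.} The logit in Eq.~\eqref{eq:unregistered-score-v2} consists of three terms:
\begin{itemize}
\item the temporal term $\inner{\q_i^t}{R_t(\Delta\tau_{ij})\kvec_j^t}$;
\item the identity terms $\inner{\q_i^h}{\kvec_j^h}$ and $\inner{\q_i^w}{\kvec_j^w}$.
\end{itemize}
The identity terms contain no coordinates. For the temporal term, I must check that $\tau$ is chart-free. This follows from Section~\ref{sec:duration}: $D_b$, $a_b$ and $\tau_i$ depend only on the modality label, the ordered-slice structure and the grid sizes $H,W$ (via Eqs.~\eqref{eq:unified-duration}, \eqref{eq:text-image-tau-new}, \eqref{eq:video-tau-new}). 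A translation of origin changes neither the grid size nor the scale $\ell(H,W)$. The same reasoning applies to $c_{ij}$ in Eq.~\eqref{eq:common-content}, since $G(\Delta\tau)$ rotates only $\Omega_t$ by $\Delta\tau$ and acts as the identity on $\Omega_h\cup\Omega_w$. The content vectors $\q_i,\kvec_j$ are unchanged by the assumption that a gauge change does not alter semantic content.

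\textbf{Step 3: the relation mass.} The gate $\gamma_i^U$ in Eq.~\eqref{eq:relation-gate} is a function of the fixed priors $\pi_r$ and of the evidences $e_i^r$ over the active groups. Each $e_i^r$ is a LogSumExp of $c_{ik}$ over $\calJ_i^r$. By Steps 1 and 2, both the index sets and the $c_{ik}$ are invariant, so $\gamma_i^U$ is invariant. This holds even though the logits $s^S$ of the other group can change. Translations of a single chart leave $s^S$ unchanged, by the intra-instance invariance proposition, but that fact is not needed here. Proposition~\eqref{eq:mass-control} then gives that the total mass assigned to the unregistered group equals $\gamma_i^U$. The within-group weights $a_{ij}^U$ are softmaxes of invariant logits, so the individual final weights $\alpha_{ij}$ on unregistered keys are invariant as well.

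\textbf{Main obstacle.} The delicate point is Step 2's claim that $\tau$ is chart-free. One has to rule out any hidden dependence of the traversal coordinate on the H/W indices, for example through a maximum-ID rule of the M-RoPE kind. I will handle this by citing the explicit formulas, which involve only $H_{b,u}$, $W_{b,u}$ and $F_b$. I will also state the scope explicitly: the invariance covers translations, as in the Gauge assumption. It covers other chart changes (resizing, cropping) only insofar as they preserve the post-tokenization grid.
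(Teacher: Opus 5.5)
Your proposal is correct and follows the paper's argument. The paper's proof simply observes that neither $s_{ij}^U$ nor $c_{ij}$ contains a cross-instance H/W difference, so chart-origin shifts cannot affect the within-group weights, the evidence gate, or the unregistered-branch output. Your extra checks make explicit what that one-line proof leaves implicit: the partition and $\tau$ depend only on $(M,I,\mathcal G)$ and grid sizes, and $\gamma_i^U$ sees the other group only through the H/W-neutral $c$.
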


\begin{proof}
Neither $s_{ij}^U$ nor $c_{ij}$ contains $h_j^{(n)}-h_i^{(m)}$ or $w_j^{(n)}-w_i^{(m)}$. Therefore independent chart-origin shifts do not change the within-group weights, evidence gate, or final output through the unregistered branch.
\end{proof}

\subsection{Relation-Specific Versus Common-Score LogSumExp}

Any flat softmax can be factorized into within-group softmaxes. Let
\begin{equation}
  Z_i^r=\sum_{j\in\calJ_i^r}\exp(s_{ij}^r).
\end{equation}
Choosing
\begin{equation}
  \gamma_{i,\mathrm{flat}}^r
  =
  \frac{Z_i^r}{\sum_u Z_i^u}
  \label{eq:lse-reconstruction}
\end{equation}
reconstructs exactly the original global softmax over the heterogeneous branch scores. This identity is useful for split-kernel implementations, but it also reconstructs the original calibration problem: relation-specific H/W phases again determine how much total mass every other group receives. RIG-RoPE also uses LogSumExp, but applies it to the \emph{same common score} $c_{ij}$ in every group. The distinction is the score being pooled, not the pooling operator. Equation~\eqref{eq:common-score-consistency} then guarantees recovery of ordinary global softmax whenever branch scores coincide with that common score.

\section{Implementation}

\subsection{Traversal-Coordinate Construction}

Traversal coordinates are preprocessing metadata computed after visual tokenization and before the transformer stack. For each packed example, the packer performs the following operations:
\begin{enumerate}[leftmargin=1.5em]
  \item segment the interleaved input into text-token, image-instance, and video-instance blocks;
  \item read the post-tokenization grid of each visual block, using $F_b$ from the temporal output dimension of the visual tokenizer;
  \item evaluate Eqs.~\eqref{eq:text-duration}, \eqref{eq:image-duration-new}, and~\eqref{eq:video-local-duration};
  \item compute block anchors by a prefix sum of slice extents;
  \item assign one center coordinate to each text token or image and one center coordinate to each video temporal token using Eqs.~\eqref{eq:text-image-tau-new} and~\eqref{eq:video-tau-new}; and
  \item broadcast the coordinate of each visual slice to all of its H/W tokens.
\end{enumerate}
Model-control tokens that do not belong to a visual grid are treated as unit text-like blocks unless a model-specific packing rule assigns them to an adjacent semantic block. Fractional $\tau_i$ values require no change to RoPE: sine and cosine phases are evaluated at real-valued coordinates. The construction is a prefix scan over block metadata and is $O(L)$ in the packed sequence length.

\subsection{Tiled Relation Routing}

A naive implementation could materialize three masks of shape $L\times L$. This is unnecessary. RIG-RoPE requires only
\begin{equation}
  M\in\{0,1\}^{L},
  \qquad
  I\in\mathbb{N}_0^{L},
  \qquad
  \tau\in\R^{L},
\end{equation}
plus the ordinary H/W coordinates already required by M-RoPE. The traversal coordinate is computed once during multimodal packing. In a FlashAttention-style tiled kernel~\citep{dao2022flashattention,dao2023flashattention2}, the default relation class is computed in registers from $(M_i,M_j,I_i,I_j)$.

The tuple $(M,I)$ implements the default instance-local contract. If $\mathcal G$ contains cross-instance registration, the packer additionally supplies a compact chart ID and coordinates already mapped into that chart; pair classification then compares chart IDs instead of only instance IDs. Dense pairwise registration metadata is outside the fixed-overhead default.

For each query row, at most two relation groups are active. A fused kernel can maintain a separate online-softmax state $(m_i^r,\ell_i^r,o_i^r)$ for each active group, together with a common-score LogSumExp state for $c_{ij}$. Each query--key pair is classified once and updates exactly one group state. After all key tiles have been processed, the kernel finalizes the within-group outputs, evaluates Eq.~\eqref{eq:relation-gate}, and mixes the two outputs. The optional balanced variant additionally tracks the count, subtracts $\log|\calJ_i^r|$, and applies the temperature scaling in Eq.~\eqref{eq:balanced-evidence}. Thus relation stratification does not require two complete passes over the attention matrix, although a simpler two-kernel implementation is possible.

\subsection{Pseudocode}

\begin{verbatim}
initialize online softmax and evidence states for active groups

for Q_block, K_block in tiled_attention:
    load q, k, v and metadata M, I, tau, h, w
    delta_tau = tau_k - tau_q
    common = dot(q, G(delta_tau) * k) / sqrt(d)

    relation = classify(M_q, M_k, shared_chart_qk)  # T, S, or U

    score_T = full_1d_rope_score(q, k, delta_tau)
    score_S = rope_t_score(q_t, k_t, delta_tau) \
              + rope_h_score(q_h, k_h, delta_h_G) \
              + rope_w_score(q_w, k_w, delta_w_G)
    score_U = rope_t_score(q_t, k_t, delta_tau) \
              + dot(q_h, k_h) + dot(q_w, k_w)

    update the online-softmax state selected by relation
    update that group's common-score logsumexp

for each active relation r:
    output_r = finalize_online_softmax(r)
    evidence_r = evidence_lse_r

gamma = softmax(evidence + log(relation_prior))
output = sum_r gamma_r * output_r
\end{verbatim}

\subsection{Complexity}

The asymptotic compute remains $O(L^2d)$ and the persistent metadata overhead is $O(L)$ under the default instance-local contract. The fused implementation stores a constant number of extra online-softmax scalars and output accumulators per query row; it never materializes dense relation masks. The common evidence shares its traversal-rotated temporal term with every branch, but its unrotated H/W content term may require extra arithmetic for registered pairs. Compared with a conventional kernel, the principal costs are that common-score evaluation, relation classification, two normalization states instead of one, and less uniform control flow. A split-kernel implementation is simpler but can increase memory traffic and launch overhead. These constants require empirical GPU benchmarking and are not claimed to be negligible.

\section{Controlled Validation Results and Remaining Protocol}

This report separates theoretical formulation, completed controlled evidence, and large-scale empirical claims. The completed results below test a specific H/W Gauge mechanism, including a matched native/RIG run in a real Qwen2-VL checkpoint. They do not validate task utility for the common gate or traversal coordinates and are not a task benchmark. The remaining subsections retain the broader protocol for work not performed in this report.

\subsection{Completed Controlled H/W Gauge Evidence}

\paragraph{Full-checkpoint baseline activation.}
We ran the underlying Qwen2-VL model in BF16, eager-attention, evaluation, inference-only mode, without a RIG patch. In the pure-coordinate pair, pixels, processor payload, token layout, and attention mask were identical, and the visual embeddings were exactly identical (maximum absolute difference $0$). Only the explicit H/W rows of the position IDs changed. The embedding/pre-decoder hidden state remained identical, but all 28 subsequent decoder hidden states and all 28 attention-layer summaries changed. This establishes that a content-preserving raw-coordinate change can propagate through this baseline checkpoint. It is not a RIG-versus-baseline task comparison, a causal task result, or evidence of superiority.

\paragraph{Matched full-checkpoint RIG mechanism test.}
We next compared native and RIG attention with the same Qwen2-VL-2B-Instruct checkpoint, inputs, BF16 dtype, eager backend, evaluation mode, and inference-only execution. Table~\ref{tab:qwen-rig-gauge-results} states the purpose and intervention for each frozen comparison. Maximum differences are taken over all recorded attention values, hidden states, and final language-model logits. Text-only native and RIG paths were exactly equal. For RIG, translating every visual H/W coordinate of one image by $(+7,-5)$, or translating only the second instance in a pair of unregistered images, left all three recorded families exactly unchanged. Applying the latter intervention to native attention produced a nonzero response. Collapsing every visual H/W coordinate in one image to the first visual token's coordinate also changed the RIG activations and logits, confirming that the same-instance spatial branch was not made H/W-insensitive.

\begin{table}[t]
\centering
\scriptsize
\setlength{\tabcolsep}{2.5pt}
\caption{Matched Qwen2-VL-2B-Instruct BF16/eager, inference-only mechanism results. Entries are maximum absolute differences (nonzero controls rounded to six decimals); zeros are exact. The first three rows are equivalence/invariance checks, and the final two are sensitivity controls.}
\label{tab:qwen-rig-gauge-results}
\begin{tabular}{>{\raggedright\arraybackslash}p{0.25\linewidth}>{\raggedright\arraybackslash}p{0.31\linewidth}>{\centering\arraybackslash}p{0.10\linewidth}>{\centering\arraybackslash}p{0.08\linewidth}>{\centering\arraybackslash}p{0.10\linewidth}}
\toprule
Comparison purpose & Frozen intervention & Attention & Hidden & Logits \\
\midrule
Text-path degeneration & Native versus RIG on text only & $0$ & $0$ & $0$ \\
Same-instance Gauge invariance & RIG; translate one image's whole chart by $(H{+}7,W{-}5)$ & $0$ & $0$ & $0$ \\
Unregistered-instance Gauge invariance & RIG; translate only image 2 in an unregistered two-image input & $0$ & $0$ & $0$ \\
Native sensitivity control & Native; apply the same image-2 translation & $0.256104$ & $2.0$ & $0.5$ \\
RIG spatial-retention control & RIG; collapse one image's H/W coordinates to its first patch & $0.103516$ & $2.0$ & $0.321289$ \\
\bottomrule
\end{tabular}
\end{table}

The native and RIG paths used the same model object. Single-image and two-image visual embeddings were exactly identical between them, and all 729 parameters preserved object, storage, bytes, and gradient-flag identity. The run took $83.9771748459898$ seconds on one idle H20, with peak allocated/reserved CUDA memory of $4{,}668{,}478{,}976/5{,}016{,}387{,}584$ bytes. No training, checkpoint write, raw tensor, task answer, accuracy, or negative log-likelihood was produced or persisted. This is activation/mechanism evidence for the frozen operator, not task utility, training benefit, Gauge-AUC, universality, or empirical superiority.

\paragraph{Frozen tiny-model comparison.}
We then trained a raw-H/W global-softmax baseline and the fixed dense RIG construction from scratch under matched data order, optimizer, width, and 200-epoch full-batch budget. The synthetic target is a cyclic relation between two visual instances; a third value is a label-independent decoy. The fixed Gauge intervention changes irrelevant chart origins without changing the relational content. Table~\ref{tab:tiny-gauge-results} reports validation metrics. Seed 0 is an exact deterministic reproduction of the already-inspected single-seed Pilot, not an independent new evidence unit.

\begin{table}[t]
\centering
\small
\setlength{\tabcolsep}{4pt}
\caption{Frozen tiny-model validation results. Accuracy cells are clean/Gauge. ``Logit diff'' is the maximum absolute clean-to-Gauge logit change. The paired delta is RIG minus raw-H/W Gauge accuracy.}
\label{tab:tiny-gauge-results}
\begin{tabular}{c c c c c c}
\toprule
Seed & Raw-H/W acc. & RIG acc. & Raw-H/W logit diff & RIG logit diff & Paired Gauge $\Delta$ \\
\midrule
0 & $0.9583/0.9444$ & $0.9722/0.9722$ & $1.211458$ & $0$ & $+0.027778$ ($+2/72$) \\
1 & $1.0000/1.0000$ & $1.0000/1.0000$ & $3.862063$ & $0$ & $0$ \\
2 & $0.9861/0.9861$ & $0.9861/0.9861$ & $3.677815$ & $0$ & $0$ \\
\bottomrule
\end{tabular}
\end{table}

The RIG logits were exactly invariant for all three seeds, whereas the raw-H/W baseline logits changed for all three. The preregistered task-stability gate nevertheless required a strictly positive Gauge-accuracy delta in at least two of three seeds and failed because only seed 0 was positive. Train and validation each contain all 36 ordered A/B combinations and differ only in their fixed decoy values. Validation therefore measures a decoy split, not unseen-relation, unseen-combination, or broad compositional generalization. The test manifest was structurally checked, but no trained model consumed test rows and no test model metric was computed or read.

\subsection{Remaining Null-Privilege and Mass-Control Protocol}

Construct synthetic query and key vectors with controlled content similarity. For each RoPE frequency, compare a valid pair $s^S(\Delta)$ with identity, Gaussian, zero, and relation-stratified treatments of an unregistered pair. Vary:
\begin{enumerate}[leftmargin=1.5em]
  \item the valid spatial displacement $\Delta$,
  \item cosine similarity between $\q_i$ and $\kvec_j$,
  \item the number of keys in each relation group,
  \item duplication of all keys in one group under both default and group-balanced evidence,
  \item fixed and learned relation priors.
\end{enumerate}
The analytic checks should recover Eq.~\eqref{eq:null-privilege-gap}, test the assumption dependence of Eq.~\eqref{eq:expected-null-privilege}, verify Eq.~\eqref{eq:mass-control}, and verify exact recovery of global common-score softmax in Eq.~\eqref{eq:common-score-consistency}. Exact duplication should increase default evidence by $\log 2$ while leaving the balanced evidence unchanged, exposing the intended difference between global-mass and group-balanced semantics. A direct comparison against Eq.~\eqref{eq:lse-reconstruction} should show that relation-specific LSE reconstruction preserves heterogeneous H/W competition whereas common-score LSE does not use H/W displacement to allocate cross-group mass.

\subsection{Further Gauge Perturbation Protocol}

Apply independent coordinate-origin shifts to two images while preserving their token content. Static cross-instance M-RoPE changes its H/W phase by the relative gauge shift. RIG-RoPE should leave $s^U$, $a^U$, and $\gamma^U$ unchanged. Padding and resizing can be reported separately because they may also change visual features and same-instance geometry; they are not pure gauge tests.

\subsection{Remaining Traversal-Coordinate Consistency Protocol}

The coordinate implementation should first be tested without a trained model. Required checks are:
\begin{enumerate}[leftmargin=1.5em]
  \item verify exact text-only offsets $\tau_j-\tau_i=j-i$;
  \item verify zero traversal offset between every pair of patches in the same image, independently of raster order;
  \item vary image H/W resolution and aspect ratio, comparing $\ell_{\mathrm{area}}$, $\ell_{\mathrm{diag}}$, and $\ell_{\max}$ against the maximum-axis span induced by the reference M-RoPE implementation;
  \item split a tokenized video at arbitrary temporal-token boundaries and verify Eq.~\eqref{eq:video-additivity} to numerical precision;
  \item append temporal tokens while holding H/W fixed and verify that an existing $k$-slice offset is unchanged, as required by Eq.~\eqref{eq:local-scale-consistency}; and
  \item encode an image and a one-temporal-token video with the same H/W grid and verify that their extents follow their separate modality parameters rather than an imposed degeneration rule.
\end{enumerate}

The principal temporal ablation compares
\begin{equation}
  (FHW)^{1/3},
  \qquad
  F,
  \qquad
  F\left(\frac{\ell(H,W)}{\ell_V^\star}\right)^{\beta_V},
  \qquad
  \sum_f \frac{\Delta t_f}{t_0}
  \left(\frac{\ell(H_f,W_f)}{\ell_V^\star}\right)^{\beta_V},
\end{equation}
with $d_V^\star$ included in the implemented versions. The first tests isotropic token-volume scaling, the second sets $\beta_V=0$, the third is the representation-time proposal, and the fourth is the timestamp-aware extension. Sweep $\beta_V\in\{0,1/4,1/2,3/4,1\}$, anchor $d_V^\star$ by Eq.~\eqref{eq:phase-range-calibration}, and then evaluate other lengths, resolutions, and aspect ratios. This separates temporal additivity, the spatial correction, and the choice of characteristic scale.

\subsection{Remaining Task-Evaluation Protocol}

The completed full-checkpoint run verifies only the frozen activation/mechanism gates. A separately authorized study would still need to evaluate an M-RoPE-based model on tasks without full retraining:
\begin{enumerate}[leftmargin=1.5em]
  \item text-only reasoning to verify traversal coordinates reduce to ordinary 1D RoPE,
  \item single-image spatial VQA to verify same-instance spatial reasoning remains intact,
  \item interleaved multi-image QA to test whether cross-instance coordinate leakage and null-relation privilege are reduced,
  \item interleaved text-image-text and text-video-text probes to test traversal scaling,
  \item video pairs with equal total token counts but different $(F,H,W)$ factorizations, and
  \item the same physical clip under multiple temporal sampling densities to compare representation time with timestamp-aware behavior.
\end{enumerate}
Required spatial ablations are flat softmax with identity fallback, Gaussian fallback, zero fallback, split kernels with relation-specific LSE reconstruction, equal relation mixing, default common-score LSE, the group-balanced variant, and optional learned relation priors. Required traversal ablations are the four coordinate families above, the three scale functions in Eq.~\eqref{eq:spatial-scales}, slice-level interpolation from Eq.~\eqref{eq:blockwise-interpolation}, and fixed versus learned reference extents. Report task accuracy, per-group attention mass, coordinate ranges, and extrapolation beyond the calibration grid; otherwise an apparent gain cannot be attributed to the proposed mechanism. This stage should be treated as sanity validation rather than final evidence.

\section{Related Work}

\paragraph{Rotary position encoding.}
RoPE was introduced in RoFormer~\citep{su2021roformer} and has become a standard position encoding mechanism in LLMs because it naturally induces relative-position structure in attention.

\paragraph{Multimodal RoPE and frequency allocation.}
Qwen2-VL introduces M-RoPE, assigning temporal, height, and width components to text, images, and videos at dynamic resolution~\citep{wang2024qwen2vl}. Its next modality starts after the maximum ID of the preceding modality, so RIG-RoPE does not claim that M-RoPE gives every image unit extent. A systematic study of multimodal RoPE identifies positional coherence, full frequency utilization, and preservation of textual priors as key principles, and proposes MRoPE-Interleave and MHRoPE~\citep{huang2025revisiting}. Qwen3-VL adopts enhanced interleaved M-RoPE~\citep{bai2025qwen3vl}. RIG-RoPE's frequency-set notation is intended to compose with these allocation strategies rather than replace them.

\paragraph{Visual position scaling.}
V2PE assigns variable, smaller position increments to visual tokens for multimodal long context~\citep{ge2024v2pe}. MODIX adapts positional granularity at inference time from covariance-based entropy and cross-modal alignment~\citep{huang2026modix}. RIG traversal addresses a related scale question but uses a different prior: under a fixed tokenizer, extent depends only on modality regime and representation geometry, not on semantic information estimates. This content independence is a design choice, not a claim that adaptive information-aware scaling is invalid.

\paragraph{Cross-modal positional bias.}
Circle-RoPE identifies cross-modal positional bias and proposes a cone-like/circular geometry so text tokens maintain equal distance to image tokens~\citep{wang2025circlerope}. Therefore the observation that cross-modal H/W phases can be problematic is not itself unique to RIG-RoPE. RIG-RoPE instead emphasizes the shared-chart contract, default instance-local invariance, and calibration of relation-specific normalizers.

\paragraph{Relation-conditioned multimodal position encoding.}
DIPE is the closest architectural neighbor: it separates intra- and inter-modal attention, applies different position operators, and fuses split-kernel outputs with LogSumExp statistics~\citep{chen2026dipe}. RIG-RoPE differs in three specific respects. Its partition is chart/instance aware rather than modality only; its group mass is computed from a common H/W-neutral score rather than relation-specific logits; and it adds ordered/parallel traversal theory. These are narrower claims than treating all relation-conditioned attention as new.

\paragraph{Video position encoding.}
VRoPE and VideoRoPE study rotary designs for video LLMs, with emphasis on spatiotemporal coherence and video--text transition behavior~\citep{liu2025vrope,videorope2025}. HoPE studies hybrid frequency allocation and dynamic temporal scaling for long video understanding~\citep{li2025hope}. Qwen3-Omni uses interleaved TM-RoPE, while Qwen3.5-Omni reports that sparse absolute time IDs can weaken long-range modeling and instead inserts explicit textual timestamps~\citep{xu2025qwen3omni,qwen2026qwen35omni}. These works motivate the distinction between representation time and physical timestamps in Section~\ref{sec:duration}.

\begin{table}[t]
\centering
\small
\setlength{\tabcolsep}{4pt}
\caption{Conceptual boundary between representative multimodal RoPE methods. ``Common gate'' means group mass is computed from a relation-independent score.}
\label{tab:method-comparison}
\begin{tabular}{>{\raggedright\arraybackslash}p{0.15\linewidth}>{\raggedright\arraybackslash}p{0.21\linewidth}>{\raggedright\arraybackslash}p{0.20\linewidth}>{\raggedright\arraybackslash}p{0.18\linewidth}>{\raggedright\arraybackslash}p{0.17\linewidth}}
\toprule
Method & Pairwise geometry & Cross-instance policy & Cross-group mass & Global traversal \\
\midrule
M-RoPE & fixed T/H/W axes & preprocessing convention & flat softmax & axis-maximum advance \\
Circle-RoPE & cross-modal circular/cone geometry & not instance-routed & flat softmax & inherited sequence design \\
DIPE & intra-/inter-modal operators & modality-level split & relation-score LSE & anchored inter-modal distance \\
RIG-RoPE & declared shared chart & instance-local default; registration allowed & common-score LSE & ordered/parallel additive metric \\
\bottomrule
\end{tabular}
\end{table}

\paragraph{Efficient attention kernels.}
FlashAttention and FlashAttention-2 show that exact attention can be made memory-efficient by tiling computation around GPU memory hierarchy~\citep{dao2022flashattention,dao2023flashattention2}. RIG-RoPE is intended for this execution model: relation gates can be computed inside each tile without materializing dense masks.

\section{Limitations}

This work establishes a geometric argument, a concrete algorithmic rule, and controlled real-checkpoint mechanism evidence, but it does not include a task-level Qwen benchmark, Qwen training, or large-scale empirical results. Several limitations should be addressed before claiming practical superiority:
\begin{enumerate}[leftmargin=1.5em]
  \item \textbf{Mechanism stability does not imply accuracy stability.} In the frozen three-seed tiny task, RIG's exact Gauge logit invariance and the raw-H/W baseline's sensitivity replicated across seeds, but the Gauge-accuracy advantage was positive in only one seed and tied in two. The preregistered stability gate therefore failed. The theoretical invariants remain intact, but stable task improvement is unestablished.
  \item \textbf{Real-model boundary.} The matched full-checkpoint Qwen2-VL-2B result verifies exact text-only native/RIG equivalence, two RIG Gauge invariances, native Gauge sensitivity, and retained same-instance H/W sensitivity for one frozen inference configuration. It contains no training, task metric, public benchmark, or multiple-model replication. It therefore cannot establish downstream benefit, post-training behavior, or generality beyond this checkpoint and intervention family.
  \item \textbf{Gate sufficiency and cardinality prior.} The common evidence in Eq.~\eqref{eq:common-content} is deliberately H/W-coordinate-neutral, but its task utility has not been established. The default retains key-count contribution; Eq.~\eqref{eq:balanced-evidence} instead imposes group balance. Unrotated evidence, learned per-head priors, or query-conditioned gates may improve accuracy at the cost of parameters and weaker analytic isolation.
  \item \textbf{Changed attention semantics.} Relation-stratified normalization is equivalent to global attention only in the common-score limit of Eq.~\eqref{eq:common-score-consistency}. With relation-specific scores it intentionally changes attention semantics, so pretrained weights are not guaranteed to remain calibrated after a training-free patch.
  \item \textbf{Unregistered H/W content.} Equation~\eqref{eq:unregistered-score-v2} retains semantic content from the H/W chunks via identity. Separate normalization removes its cross-group phase privilege, but the best unregistered branch parameterization remains empirical. Zero, Gaussian, full-temporal, and explicitly decoupled content branches should be ablated.
  \item \textbf{Traversal calibration and utility.} The power-law form is a structured inductive bias rather than a uniquely derived law, and its task utility has not been tested. The choice of $\ell$, the video-family parameter $\beta_V$, the reference extents $D_I^\star,d_V^\star$, and the optional $\lambda$ alter phase ranges. Equation~\eqref{eq:phase-range-calibration} supplies a deterministic base-model anchor but does not determine the best extrapolation behavior.
  \item \textbf{Representation-time semantics.} The core video coordinate scales with the number of post-tokenization temporal tokens and is therefore not invariant to sampling density. It is neither physical wall-clock time nor measured cognitive time. Tasks requiring timestamp fidelity should evaluate Eq.~\eqref{eq:physical-time-extension} separately.
  \item \textbf{Cross-modal boundary.} A one-temporal-token video is intentionally not forced to share the image extent. This preserves modality-specific observation regimes but introduces additional calibration freedom that must be justified empirically.
  \item \textbf{Chart-contract dependence.} Instance-local geometry is a conservative default, not a universal truth. Tasks may benefit from canonical normalized screen coordinates even without physical registration. Such pairs should enter $S$ only when that convention is declared and tested; the gauge theorem does not rule out convention-dependent predictive features.
  \item \textbf{Video and registration boundaries.} Treating one video as a shared screen-space instance is approximate under camera motion, cropping, and object motion. Conversely, stereo pairs, aligned medical scans, and tracked frames may provide valid cross-instance registration and should be reclassified accordingly.
  \item \textbf{Kernel engineering.} A fused grouped-softmax kernel has the same asymptotic complexity but additional state and control flow. Throughput, memory traffic, numerical stability, causal masking, and KV-cache behavior require measurement on modern accelerators.
\end{enumerate}

\section{Conclusion}

RIG-RoPE is based on a simple principle: rotary phases and softmax denominators should reflect meaningful relation structure. An unregistered pair cannot be made intrinsic by assigning a special distance. Zero maximizes phase contribution in the self-aligned case; Gaussian uncertainty assumes a latent alignment that may not exist; and no fixed matrix reproduces every valid displacement pointwise. A typed relation followed by relation-homogeneous normalization is therefore one principled solution, though not the only possible calibration mechanism.

The revised method preserves full one-dimensional RoPE for text--text interactions and M-RoPE H/W rotations wherever $\mathcal G$ declares a shared visual chart, while replacing the temporal-axis coordinate by $\tau$. Unregistered text--vision and default cross-instance visual interactions remain fully connected but are normalized in their own branch. A common H/W-neutral LogSumExp gate allocates total mass across branches and exactly recovers global softmax in the common-score limit. Representation-aware traversal coordinates independently provide one scalar cross-block axis: text is preserved exactly, image patches are simultaneous, and video extent is additive over tokenizer-produced temporal slices with a bounded sublinear spatial correction. The construction is explicitly representation time rather than physical time, and a one-slice video need not degenerate to an image.

Controlled validation supports the narrower mechanism claim. In a matched full-checkpoint Qwen2-VL-2B inference run, native attention responded to a second-instance-only coordinate translation, whereas RIG was exactly invariant to that intervention and to a whole-chart single-image translation; text-only native/RIG equality, exact visual-embedding identity, exact parameter identity, and a nonzero H/W-collapse control constrain simpler implementation explanations. The frozen tiny RIG construction also removed the declared Gauge dependence from output logits across three seeds. Under that same frozen task protocol, paired Gauge-accuracy differences were $+2/72$, $0$, and $0$, so the preregistered stability gate failed and stable task utility remains unestablished. The formula-level conclusions and controlled mechanism evidence therefore stand, but downstream benefit, common-gate task utility, traversal task utility, training benefit, benchmark performance, universality, and empirical superiority remain unestablished.

\appendix
\section{Additional Null-Relation Results}
\label{app:null-results}

\subsection{No Universally Neutral Fixed Fallback}

One might replace $I$ by a fixed matrix $N$ chosen to be less favorable. No fixed operator can be pointwise neutral relative to every valid displacement.

\begin{theorem}[No pointwise universal fixed fallback]
For a nonzero RoPE frequency $\omega$, there is no fixed matrix $N\in\R^{2\times2}$ satisfying
\begin{equation}
  u^\top Nu
  =
  u^\top R(\omega\Delta)u
  \label{eq:universal-null}
\end{equation}
for every $u\in\R^2$ and every valid displacement $\Delta$.
\end{theorem}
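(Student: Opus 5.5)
The plan is to argue by contradiction and reduce the matrix identity to a scalar identity that different displacements cannot share. A quadratic form $u^\top Nu$ depends only on the symmetric part of $N$. From Eq.~\eqref{eq:general-rotary-bilinear} with $q=k=u$ (the $J$ term vanishes because $J$ is skew), we have $u^\top R(\theta)u=\lVert u\rVert^2\cos\theta$, exactly as in the proof of the self-aligned null-relation privilege theorem. So suppose a fixed $N$ satisfies Eq.~\eqref{eq:universal-null} for every $u$ and every valid $\Delta$. Then for every valid $\Delta$,
\begin{equation*}
  u^\top \tfrac12(N+N^\top)u=\lVert u\rVert^2\cos(\omega\Delta)\qquad\forall u\in\R^2 .
\end{equation*}
By polarization this forces $\tfrac12(N+N^\top)=\cos(\omega\Delta)\,I$.

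Now fix $N$ and compare two valid displacements. The left-hand side $\tfrac12(N+N^\top)$ does not depend on $\Delta$. Hence $\cos(\omega\Delta)$ must be constant over all valid $\Delta$.

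The only real step is to exhibit two valid displacements with different cosines. I would take $\Delta=0$, which is valid because it is realized by any same-patch or same-row pair. This gives $\cos 0=1$. For the second, I would take a unit displacement $\Delta=1$, giving $\cos\omega$. If $\cos\omega=1$, i.e.\ $\omega\in2\pi\mathbb{Z}$, I would use some integer displacement $\Delta$ with $\omega\Delta\notin2\pi\mathbb{Z}$. Such a $\Delta$ exists whenever $\omega$ is nonzero and non-degenerate; for standard RoPE frequencies $\omega=\theta^{-2k/d}\in(0,1]$, so already $\Delta=1$ works.

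The main obstacle is exactly this degenerate case. If $\omega\Delta\in2\pi\mathbb{Z}$ for all valid $\Delta$, then $N=I$ satisfies the identity and the statement fails. I would therefore state explicitly that ``nonzero frequency'' is meant as non-degenerate, as in the static-assignment theorem, or note that the valid displacement set contains $0$ and $1$ with $\omega\notin2\pi\mathbb{Z}$. Either way we obtain $1\neq\cos(\omega\Delta)$ while both must equal the same constant. This contradiction completes the argument. It also shows that even matching only the symmetric part, let alone $N$ itself, is impossible.
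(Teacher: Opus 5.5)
Your proposal is correct and is essentially the paper's argument: the quadratic form sees only the symmetric part, which gives $\tfrac12(N+N^\top)=\cos(\omega\Delta)I$ for every valid $\Delta$, and this fails because the right-hand side is not constant in $\Delta$. Your explicit choice of $\Delta=0$ and $\Delta=1$, and your flagging of the case $\omega\Delta\in2\pi\mathbb{Z}$ for all valid $\Delta$, usefully spell out what the paper's phrase ``varies with $\Delta$'' tacitly assumes; note, though, that if $\omega\in2\pi\mathbb{Z}$ no integer $\Delta$ gives $\omega\Delta\notin2\pi\mathbb{Z}$, so that case must be excluded by hypothesis or handled with non-integer registered displacements, as you eventually say.
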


\begin{proof}
Equation~\eqref{eq:universal-null} for every $u$ implies equality of the symmetric parts of the two matrices. The symmetric part of $R(\omega\Delta)$ is $\cos(\omega\Delta)I$. A fixed $N$ would therefore require
\begin{equation}
  \tfrac12(N+N^\top)=\cos(\omega\Delta)I
\end{equation}
for every $\Delta$, which is impossible for nonzero $\omega$ because the right-hand side varies with $\Delta$.
\end{proof}

Thus neutrality must be defined relative to a chosen distribution, task, or normalization structure; it cannot be obtained by discovering a special displacement such as $0$, $\infty$, or a single constant rotation. This theorem rules out pointwise equivalence to every valid rotation. It does not make relation-stratified normalization logically unique: learned null relations, relation-conditioned projections, or statistically calibrated biases remain possible alternatives.

\subsection{Why Gaussian Marginalization Is Insufficient}

Suppose a displacement is assumed to exist but is uncertain,
\begin{equation}
  \Delta\sim\mathcal{N}(0,\sigma^2).
\end{equation}
For one frequency block,
\begin{equation}
  \mathbb{E}[R(\omega\Delta)]
  =
  \exp\!\left(-\tfrac12\omega^2\sigma^2\right)I
  =\alpha_\omega I,
  \qquad 0<\alpha_\omega\leq1.
  \label{eq:gaussian-fallback}
\end{equation}
For identical content, the fallback score is $\alpha_\omega\lVert u\rVert^2$. This attenuates identity fallback but does not make it neutral relative to all $\cos(\omega\Delta)$. More importantly, the prior describes a latent relation that exists and is concentrated around alignment. It is appropriate for noisy registration, not for a pair for which the modeling contract declares no shared chart.

Taking $\sigma\rightarrow\infty$ yields $\alpha_\omega\rightarrow0$, which removes both spatial phase and semantic evidence carried in that channel and changes the logit variance. Moreover, logit marginalization does not equal attention marginalization:
\begin{equation}
  \mathbb{E}_{\Delta}[\softmax(s(\Delta))]
  \neq
  \softmax\!\left(\mathbb{E}_{\Delta}[s(\Delta)]\right)
  \label{eq:softmax-expectation}
\end{equation}
in general. Gaussian and zero operators remain useful ablations for partially registered and conservative settings, respectively, but neither is universally neutral for an unregistered relation.

\end{document}